\theoremstyle{plain}
\newtheorem{theorem}{Theorem}[section]
\newtheorem{lemma}[theorem]{Lemma}
\newtheorem{corollary}[theorem]{Corollary}
\theoremstyle{definition}
\newtheorem{assumption}[theorem]{Assumption}
\theoremstyle{remark}
\icmltitlerunning{FedDisco: Federated Learning with Discrepancy-Aware Collaboration}
\begin{document}

\twocolumn[
\icmltitle{FedDisco: Federated Learning with Discrepancy-Aware Collaboration}




\begin{icmlauthorlist}
\icmlauthor{Rui Ye}{cmic}
\icmlauthor{Mingkai Xu}{cmic}
\icmlauthor{Jianyu Wang}{cmu}
\icmlauthor{Chenxin Xu}{cmic}
\icmlauthor{Siheng Chen}{cmic,ailab}
\icmlauthor{Yanfeng Wang}{ailab,cmic}
\end{icmlauthorlist}

\icmlaffiliation{cmic}{Cooperative Medianet Innovation Center, Shanghai Jiao Tong University, Shanghai, China}
\icmlaffiliation{cmu}{Carnegie Mellon University, Pittsburgh, PA, USA}
\icmlaffiliation{ailab}{Shanghai AI Laboratory, Shanghai, China}

\icmlcorrespondingauthor{Siheng Chen}{sihengc@sjtu.edu.cn}

\icmlkeywords{Machine Learning, ICML}

\vskip 0.3in
]



\printAffiliationsAndNotice{}  

\begin{abstract}
This work considers the category distribution heterogeneity in federated learning. This issue is due to biased labeling preferences at multiple clients and is a typical setting of data heterogeneity. To alleviate this issue, most previous works consider either regularizing local models or fine-tuning the global model, while they ignore the adjustment of aggregation weights and simply assign weights based on the dataset size. However, based on our empirical observations and theoretical analysis, we find that the dataset size is not optimal and the discrepancy between local and global category distributions could be a beneficial and complementary indicator for determining aggregation weights. We thus propose a novel aggregation method, Federated Learning with Discrepancy-aware Collaboration (FedDisco), whose aggregation weights not only involve both the dataset size and the discrepancy value, but also contribute to a tighter theoretical upper bound of the optimization error. FedDisco also promotes privacy-preservation, communication and computation efficiency, as well as modularity. Extensive experiments show that our FedDisco outperforms several state-of-the-art methods and can be easily incorporated with many existing methods to further enhance the performance. Our code will be available at \href{https://github.com/MediaBrain-SJTU/FedDisco}{https://github.com/MediaBrain-SJTU/FedDisco}.
\end{abstract}

\section{Introduction}
Federated learning (FL) is an emerging field that offers a privacy-preserving collaborative machine learning paradigm~\cite{advances}. Its main idea is to enable multiple clients to collaboratively train a shared global model by sharing information about model parameters. FL algorithms have been widely applied to many real-world scenarios, such as users' next-word prediction~\cite{hard2018federated}, recommendation systems~\cite{fedsubavg}, and smart healthcare~\cite{fedmedical}.

In spite of the promising trend, there are still multiple key challenges that impede the further development of FL, including data heterogeneity, communication cost and privacy concerns~\cite{advances}. In this work, we focus on one critical issue in data heterogeneity:~\emph{category distribution heterogeneity}; that is, clients have drastically different category distributions. For example, some clients have many samples in category 1 but very few in category 2; while other clients are opposite. This setting is common in practice, since clients collect local data privately and usually tend to have biased labeling preferences. Due to this heterogeneity, different local models are optimized towards different local objectives, causing divergent optimization directions. It is thus difficult to aggregate these divergent local models to obtain a robust global model. This unpleasant phenomenon has been theoretically and empirically verified in~\cite{zhao2018federated,li2019convergence,wangsurvey}.

\begin{figure}[t]
\vskip 0.2in
\centering
\includegraphics[width=0.9\columnwidth]{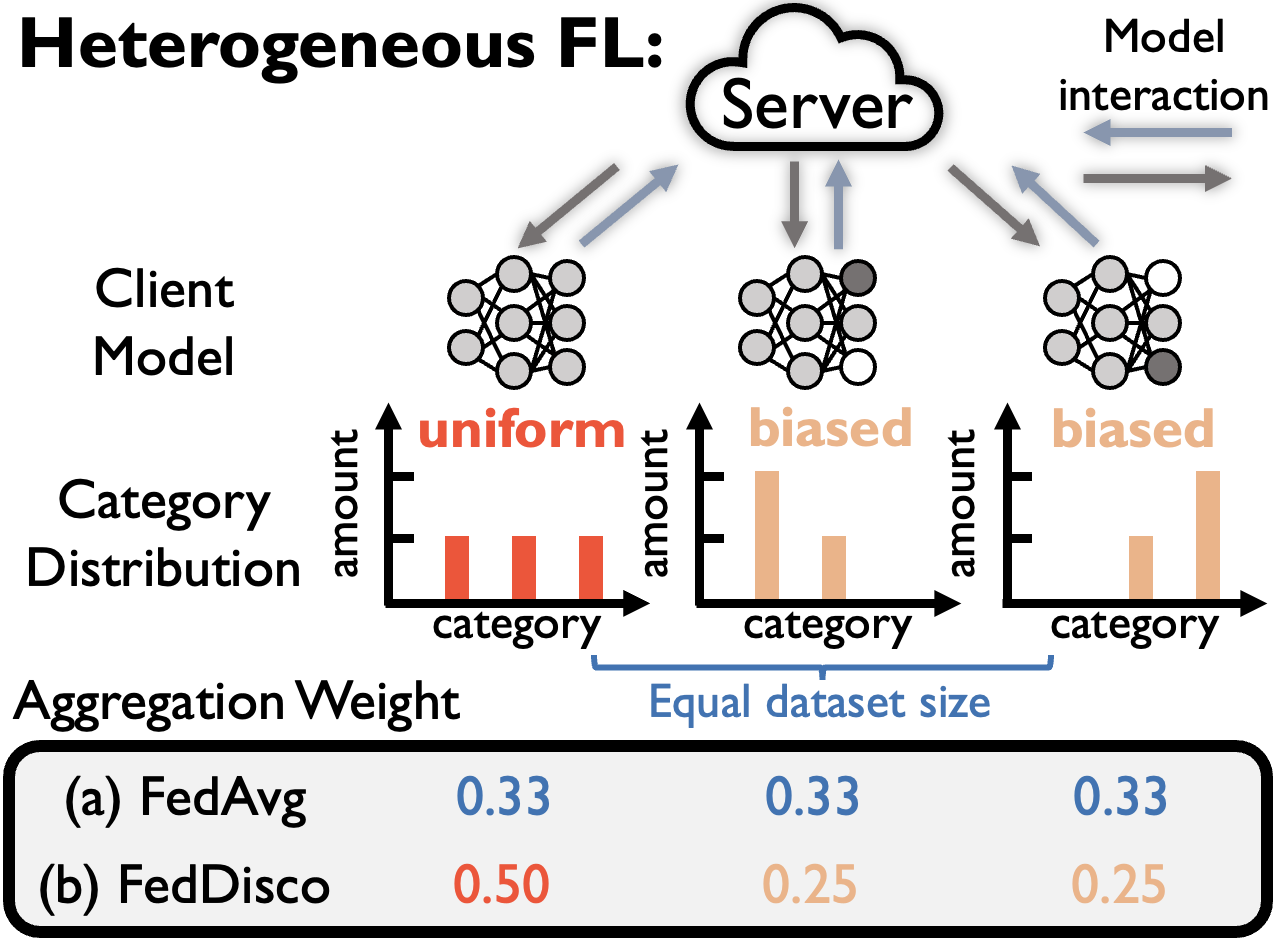}
\caption{To mitigate the negative impact of category distribution heterogeneity, FedDisco considers discrepancy-aware  aggregation weights, which involves both dataset size and discrepancy between local category distribution and uniform distribution. FedDisco is still privacy preserving, communication and computation efficient.}
\label{figure:intro}
\vskip -0.2in
\end{figure}

To mitigate the negative impact of category distribution heterogeneity, there are mainly two approaches: i) local model adjustment at the client side and ii) global model adjustment at the server side. Most previous works consider the first approach. For example, FedProx~\cite{fedprox} and FedDyn~\cite{feddyn} introduce regularization terms; and MOON~\cite{moon} aligns intermediate outputs of local and global models to reduce the variance among optimized local models. Along the line of the second approach, FedDF and FedFTG~\cite{feddf, fedftg} introduce an additional fine-tuning step to refine the global model. Despite all these diverse efforts, few works consider optimizing the aggregation weight at the server side\footnote{FedNova~\cite{fednova} targets system heterogeneity.}. In fact,  most previous works simply aggregate the local models according to the dataset size at each local client. However, the dataset size does not reflect any categorical information and thus cannot provide sufficient information of a local client. It is still unclear whether the dataset size based aggregation is the best aggregation strategy.

In this paper, we first answer the above open question by empirically showing that, in many cases, aggregating local models based on the local dataset size is consistently far from optimal; meanwhile, incorporating the discrepancy between local and global category distributions into the aggregation weights could be beneficial. Intuitively, a lower discrepancy value reflects that the private data at a local client has a more similar category distribution with the hypothetically aggregated global data, and the corresponding client should contribute more to the global model than those with higher discrepancy values. To understand the effect of category distribution heterogeneity, we next provide a theoretical analysis for federated averaging with arbitrary aggregation weights, dataset sizes and local discrepancy levels. By minimizing the reformulated upper bound of the optimization error, we obtain an optimized aggregation weights for clients, which depend on not only the local dataset sizes but also the local discrepancy levels.

Inspired by the above empirical and theoretical observations, we propose a novel model aggregation method, federated learning with discrepancy-aware collaboration (FedDisco), to alleviate the category distribution heterogeneity issue. This new method leverages each client's dataset size and discrepancy in determining aggregation weight by assigning larger aggregation weights to those clients with larger dataset sizes and smaller discrepancy values. As a novel model aggregation scheme, FedDisco is still privacy-preserving and can be easily incorporated with many FL methods, such as FedProx~\cite{fedprox} and MOON~\cite{moon}. Compared to the vanilla aggregation based on local dataset size, FedDisco nearly introduces no additional computation and communication costs.

To validate the effectiveness of our proposed FedDisco, we conduct extensive experiments under diverse scenarios, such as various heterogeneous settings, globally balanced and imbalanced distributions, full and partial client participation, and across four datasets. We observe that FedDisco can significantly and consistently improve the performance over existing FL algorithms.

The main contributions of this paper is listed as follows:
\begin{enumerate}
  \item We empirically show that the dataset size is not optimal to determine aggregation weights in FL and the discrepancy between local and global category distributions could be a beneficial complementary indicator;
  \item We theoretically show that aggregation weight correlated with both dataset size and discrepancy could contribute to a tighter error bound;
    \item We propose a novel aggregation method, called FedDisco (federated learning with discrepancy-aware collaboration), whose aggregation weight follows from the optimization of the theoretical error bound;
    \item Extensive experiments show that FedDisco achieves state-of-the-art performances under diverse scenarios.
\end{enumerate}

\section{Background of Federated Learning}
\label{sec:background}
Suppose there are total $K$ clients, where the $k$-th client holds a private dataset $\mathcal{B}_k=\{ (\mathbf{x}_i,y_i) | i=1,2,...,|\mathcal{B}_k|\}$ and a local model $\mathbf{w}^{(t, r)}_k$, where $\mathbf{x}_i$ and $y_i$ denote the $i$-th input and label, $t$ and $r$ denote the round and iteration indices of training. The local category distribution of $k$-th client's dataset is defined as $\mathbf{D}_k \in \mathbb{R}^C$, where the $c$-th element $\mathbf{D}_{k,c}=\frac{|\{ (\mathbf{x}_i,y_i) | y_i=c \}|}{|\mathcal{B}_k|}, c \in \{1,2,\cdots,C\}$ is the data proportion of the $c$-th catregory. See detailed notation descriptions in~\cref{table:notation}.

Mathematically, the global objective of federated learning is $\mathop{\min}_{\mathbf{w}} F (\mathbf{w})=\sum_{k=1}^K n_k F_k(\mathbf{w})$,
where $n_k=\frac{| \mathcal{B}_k|}{\sum_{i=1}^{K} \mathcal{B}_i}$ and $F_k(\mathbf{w})$ are the dataset relative size and local objective of client $k$, respectively. In the basic FL, FedAvg~\cite{fedavg}, each training round $t$ proceeds as follows:
\begin{enumerate}
    \item Sever broadcasts the global model $\mathbf{w}^{(t,0)}$ to clients;
    \item Each client $k$ performs local model training using $\tau$ SGD steps to obtain a trained model denoted by $\mathbf{w}_k^{(t,\tau)}$;
    \item Clients upload the local models to the server;
    \item Server updates the global model based on the aggregated local models: $\mathbf{w}^{(t+1,0)}=\sum_{k=1}^K p_k \mathbf{w}_k^{(t,\tau)}$, where $p_k$ is the aggregation weight for the client $k$.
\end{enumerate}
As mentioned in introduction, the category distribution heterogeneity issue could cause different local objectives in local training. To address this, many previous works focus on adjustment on training $\mathbf{w}_k^{(t,\tau)}$, while neglects the adjustment of aggregation weight $p_k$. In fact, most previous works conduct model aggregation simply based on the local dataset relative size; that is, $p_k=n_k$. However, we notice that dataset-size-based weighted aggregation could be not optimal in empirical investigation, which motivate us to search for a better aggregation strategy. The empirical observations are delivered in the next section.

\section{Empirical Observations}

This section presents a series of empirical observations, which motivate us to consider a new form of aggregation weight and propose our method. In the experiments, we divide the CIFAR-10~\cite{cifar10} dataset into $10$ clients, whose local category distributions are heterogeneous and follow commonly used Dirichlet distribution~\cite{fedma}; see Figure~\ref{figure:motivation} (a). Based on this common setting, we conduct the following experiments.

\begin{figure}[t]  
\vskip 0.2in
	\centering
	\subfigure[Data distributions]{
		\includegraphics[width=0.47\columnwidth]{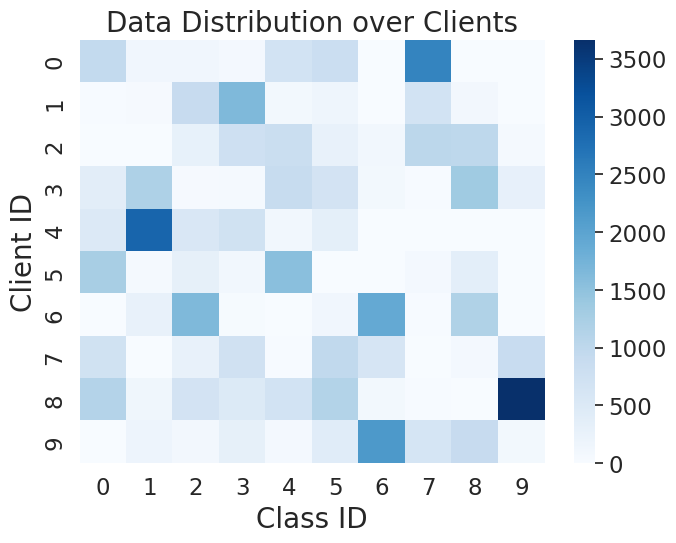}
	}
	\subfigure[Exploration 1: aggregation]{
		\includegraphics[width=0.47\columnwidth]{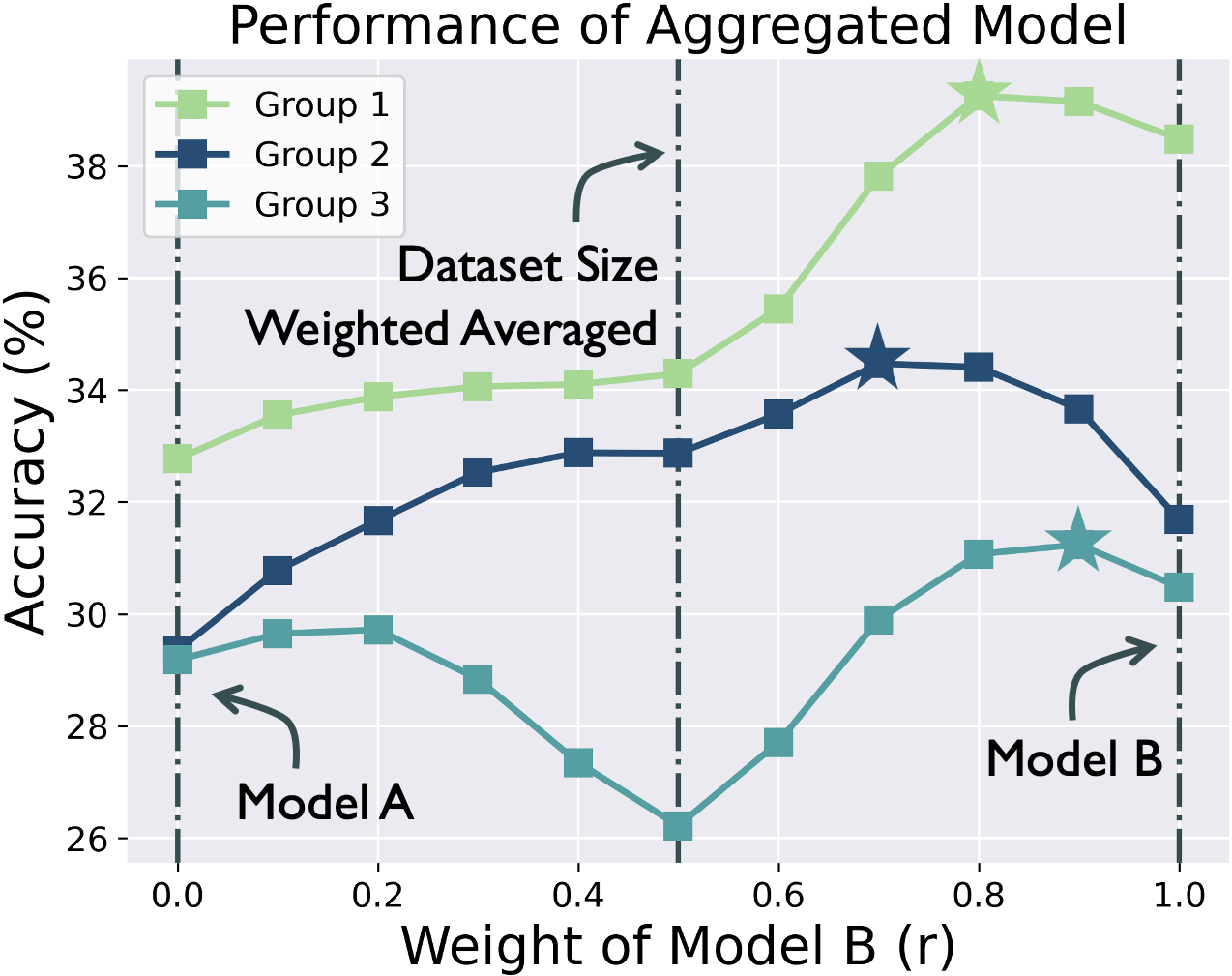}
	}
	\subfigure[Exploration 2: dataset size]{
		\includegraphics[width=0.47\columnwidth]{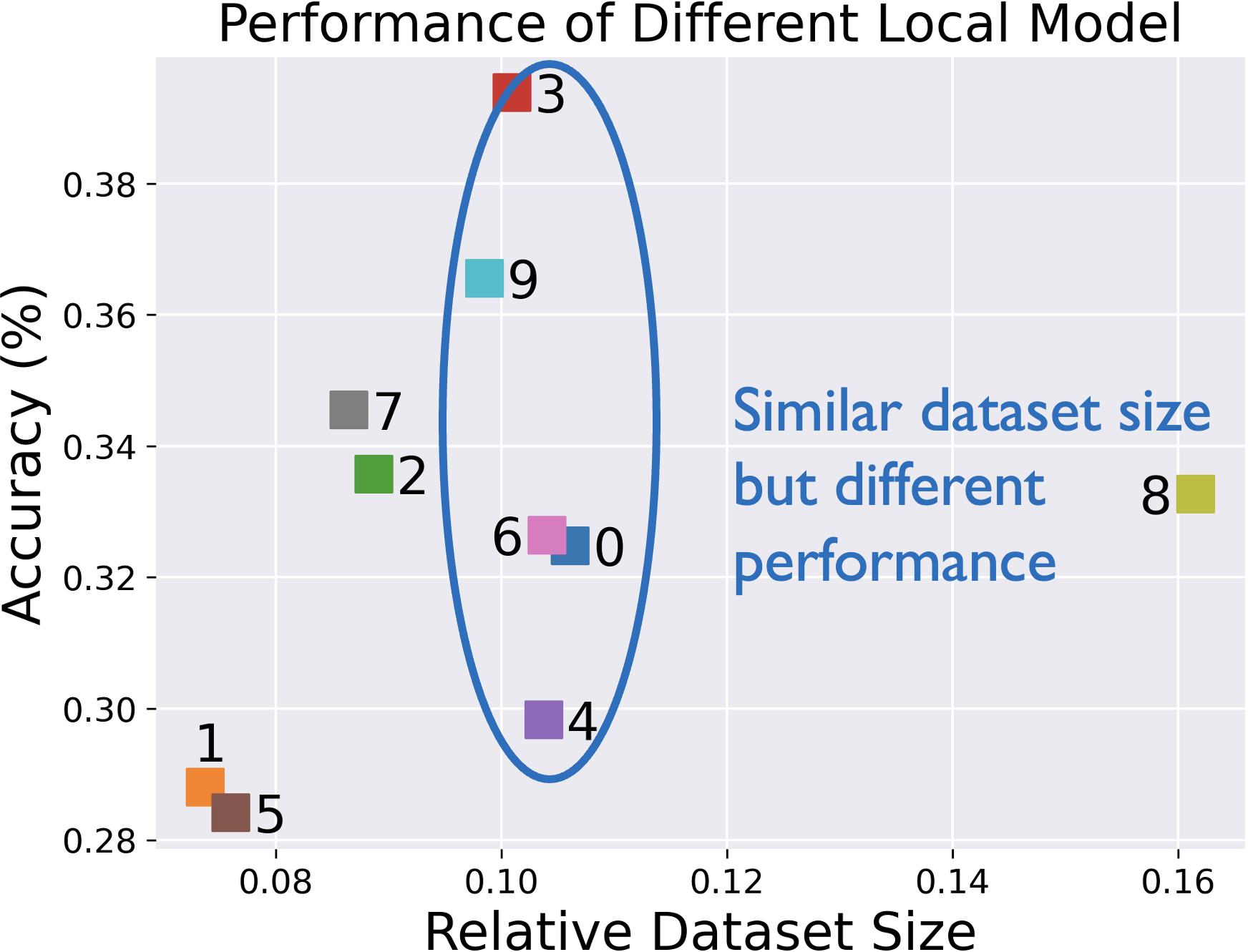}
	}
	\subfigure[Exploration 2: discrepancy]{
		\includegraphics[width=0.47\columnwidth]{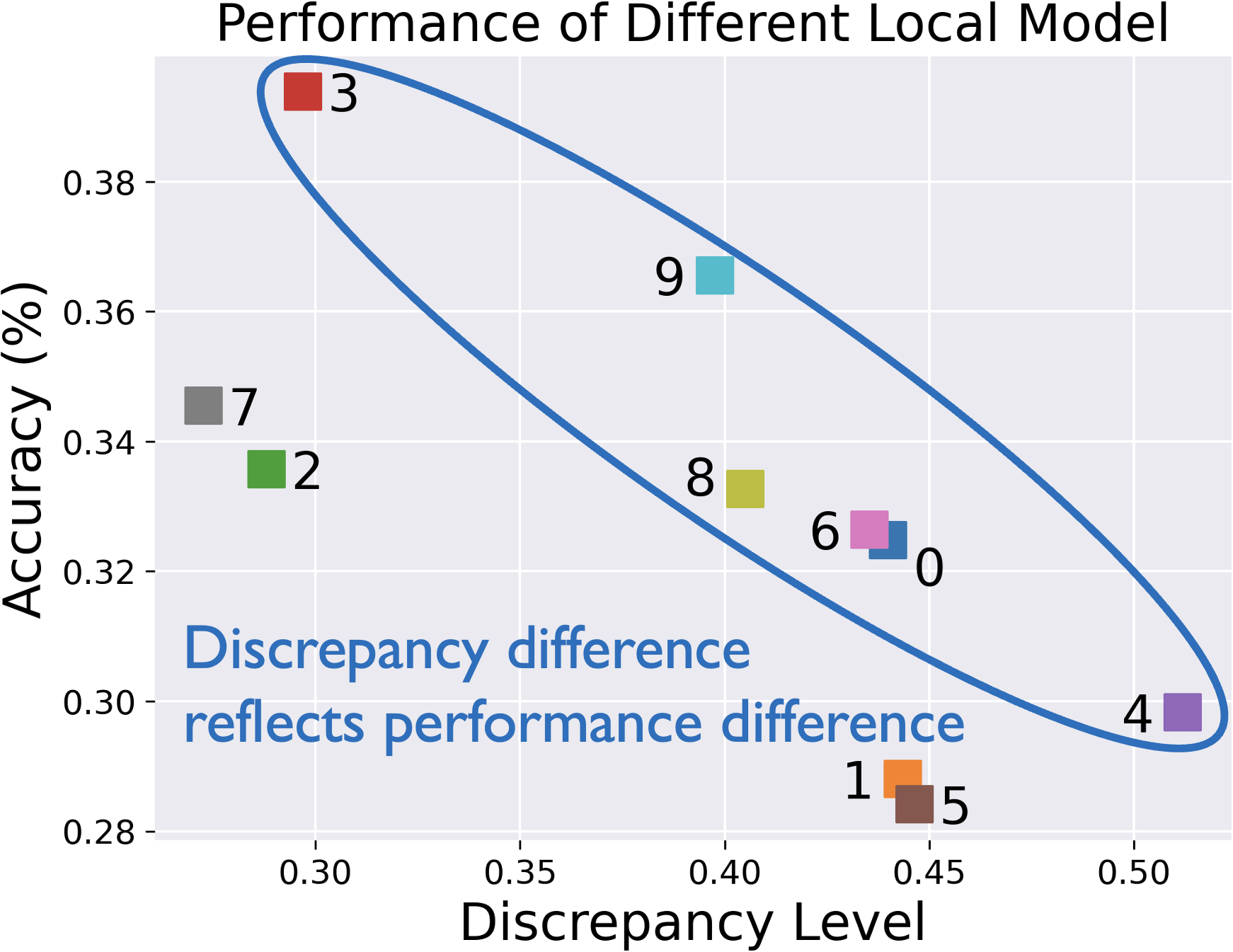}
	}
	\caption{Experiments for empirical observations. Exploration 1 shows that dataset size could be not optimal indicator for aggregation weight. Exploration 2 shows that discrepancy could be a beneficial complementary indicator.}
	\label{figure:motivation}
\vskip -0.2in
\end{figure}

To explore the relationship between the aggregation weight and the performance of aggregated global model, we conduct three independent trials, where each trial considers federated training of two clients with similar dataset sizes. In each trial, each client trains a model (denoted as A and B) for $10$ epochs and we aggregate these two models following: $\mathbf{w}_{\rm agg} = (1-r)\mathbf{w}_{\rm A}+r\mathbf{w}_{\rm B}$. Figure~\ref{figure:motivation} (b) shows three curves, which are the testing results of the aggregated model $\mathbf{w}_{\rm agg}$ as a function of aggregation weight $r$ in three trials, respectively. Note that $r=0$ reflects Model A,  $r=1$ reflects Model B, and $r=0.5$ represents dataset-size-based weighted aggregation. We observe that i) {\bf it is not optimal to determine aggregation weights purely based on local dataset size.}  The performances at $r=0.5$ could be far from the optimal performances (stars in the plot); and ii) {\bf best performance is achieved when assigning a relatively larger weight $r$ to a better-performed local model.} In these trials, Model B outperforms Model A, and the best performance is achieved when Model B has a larger weight ($r$ is around $0.7\sim0.9$). Similar phenomena can be seen in ensemble learning~\cite{jimenez1998dynamically,shen2004dynamically}, which assigns larger weight to better model in ensemble.

To search for indicators that can reflect local model's performance and eventually appropriate aggregation weight, we explore the effects of two informative properties of client in category heterogeneity scenario: dataset size and discrepancy between local and global category distribution. Here we use $\ell_2$ distance to measure the discrepancy. We plot all $10$ clients in Figure~\ref{figure:motivation} (c) \& (d), where y-axis denote the local model's testing accuracy and x-axis in (c) and (d) denotes client's dataset relative size and discrepancy level, respectively. We highlight four clients in a circle, which have similar dataset sizes. Comparing these two plots, we clearly see that {\bf the discrepancy level is a better indicator to reflect the local model's performance than the dataset size.} In plot (c), we see that these clients have similar dataset sizes but largely different performances; while in plot (d), the discrepancy level clearly reflects the performance difference among these four clients, that is, the client with a smaller discrepancy performs better.

Based on these observations, we hypothesize that, in FL, when a client has a smaller discrepancy value, it might have a better-performed model and thus need a larger weight in aggregation. Motivated by this, we propose to leverage discrepancy in determining the aggregation weight of each client, which is theoretically analyzed in the following.

\section{Theoretical Analysis}
\label{sec:theory}

In this section, we firstly obtain a convergence error bound for FedAvg~\cite{fedavg}, which highlights the effect of aggregation weight $p_k$, dataset size $n_k$ and discrepancy level $d_k$. Then, a concise analytical expression of the optimized aggregation weight is derived by minimizing the error bound, which indicates that aggregation weight should depend on both dataset size and local discrepancy level.

\paragraph{Optimization error upper bound.}
Our analysis is based on the following four standard assumptions in FL.

\begin{assumption}[Smoothness]
\label{ass_smooth}
Function $F_k(\mathbf{w})$ is Lipschitz-smooth: $||\nabla F_k(\mathbf{x})- \nabla F_k(\mathbf{y})|| \leq L ||\mathbf{x}-\mathbf{y}||$ for some $L$.
\end{assumption}

\begin{assumption}[Bounded Scalar]
\label{ass_scalar}
The global objective function $F(\mathbf{w})$ is bounded below by $F_{inf}$.
\end{assumption}

\begin{assumption}[Unbiased Gradient and Bounded Variance]
\label{ass_grad}
For each client, the stochastic gradient is unbiased: $\mathbb{E}_\xi[g_k(\mathbf{w}|\xi)] = \nabla F_k(\mathbf{w})$, and has bounded variance: $\mathbb{E}_\xi[||g_k(\mathbf{w}|\xi)-\nabla F_k(\mathbf{w})||^2] \leq \sigma^2$.
\end{assumption}

\begin{assumption}[Bounded Dissimilarity]
\label{ass_diss}
For each loss function $F_k(\mathbf{w})$, there exists constant $B > 0$ such that $||\nabla F_k(\mathbf{w})||^2 \leq ||\nabla F(\mathbf{w})||^2 + B d_k$.
\end{assumption}

All assumptions are commonly used in federated learning literature \cite{fednova,fedprox,li2019convergence,reddi2021adaptive}. As no previous literature has considered discrepancy in their theory, a new assumption is required to include discrepancy. However, rather than introducing an additional assumption, we slightly adjust standard dissimilarity assumption~\cite{fednova} and apply~\cref{ass_diss} to include the discrepancy level, this modification correlates the gradient dissimilarity and distribution discrepancy, and enables us to explore the relationships among aggregation weight $p_k$, dataset relative size $n_k$ and discrepancy $d_k$.

We present the optimization error bound in~\cref{theorem_1}.
\begin{theorem}[Optimization bound of the global objective function]
\label{theorem_1}
Let $F (\mathbf{w})=\sum_{k=1}^K n_k F_k(\mathbf{w})$ be the global objective. Under these Assumptions, if we set $\eta L \leq \frac{1}{2 \tau}$, the optimization error will be bounded as follows:
\begin{equation*}
\small
\begin{aligned}
& \mathop{\min}_{t} \mathbb{E} ||\nabla F (\mathbf{w}^{(t,0)})||^2 \leq \underbrace{\frac{1}{1-3A-W_D(1-A)}}_{T_0} \\
& \bigg( \underbrace{\frac{2(1-A)[F(\mathbf{w}^{(0,0)})-F_{inf}]}{\tau \eta T}}_{T_1} + \underbrace{\frac{(1-A)W_DB}{K}\sum_{k=1}^Kd_k}_{T_2} \\
& + \underbrace{2(1-A)L\eta \sigma^2 \sum_{k=1}^K p_k^2}_{T_3} + \underbrace{2(\tau-1)\sigma^2 L^2 \eta^2}_{T_4} + \underbrace{2AB\sum_{k=1}^K p_k d_k}_{T_5}  \bigg)
\end{aligned}
\end{equation*}
where $W_D=2\sum_{k=1}^K (n_k-p_k)^2$, $A=2\tau(\tau-1)\eta^2L^2$, $\eta$ is local learning rate, $B$ is a constant in~\cref{ass_diss}.
\end{theorem}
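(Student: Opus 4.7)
The plan is to follow the standard nonconvex FedAvg template, but to carefully track how the mismatch between $p_k$ and $n_k$ interacts with the dissimilarity assumption. I would start from $L$-smoothness of $F$ applied to the global update $\mathbf{w}^{(t+1,0)} - \mathbf{w}^{(t,0)} = -\eta\sum_k p_k\sum_{r=0}^{\tau-1} g_k(\mathbf{w}_k^{(t,r)})$, take expectation over stochastic gradients (using~\cref{ass_grad}), and separately bound the resulting first-order inner product and the second-order squared update. Telescoping the per-round descent inequality over $t=0,\dots,T-1$, dividing by $\tau\eta T$, and passing to the minimum will eventually deliver the stated bound.

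For the first-order term, I would write $\sum_k p_k \nabla F_k = \nabla F + \sum_k (p_k - n_k)\nabla F_k$ so that $-\langle \nabla F, \sum_k p_k \nabla F_k\rangle = -\|\nabla F\|^2 - \langle \nabla F, \sum_k(p_k-n_k)\nabla F_k\rangle$, then apply Young's inequality to split off another $\|\nabla F\|^2$ and a residual $\|\sum_k(p_k-n_k)\nabla F_k\|^2$. Cauchy--Schwarz bounds this residual by $\sum_k(n_k-p_k)^2 \sum_k \|\nabla F_k\|^2$, and \cref{ass_diss} then converts it into a $W_D\|\nabla F\|^2$ contribution (to be absorbed into the $T_0$ denominator) plus the $T_2$ discrepancy term $\tfrac{W_D B}{K}\sum_k d_k$. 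For the second-order term $\mathbb{E}\|\sum_k p_k \sum_r g_k\|^2$, the variance piece gives $T_3 \propto \sigma^2\sum_k p_k^2$ using client independence in~\cref{ass_grad}. The remaining deterministic piece is handled through a standard client-drift bound $\mathbb{E}\|\mathbf{w}_k^{(t,r)}-\mathbf{w}^{(t,0)}\|^2 \le O(\tau\eta^2\sigma^2)+O(\tau^2\eta^2\|\nabla F_k\|^2)$; combined with smoothness and~\cref{ass_diss}, this yields the $T_4$ term $2(\tau-1)\sigma^2 L^2\eta^2$, the $T_5$ term $2AB\sum_k p_k d_k$, and additional $\|\nabla F\|^2$ contributions that merge into the $3A$ piece of the $T_0$ denominator. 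The stepsize condition $\eta L \le \frac{1}{2\tau}$ is what keeps $1-3A-W_D(1-A)$ positive so that one can move these terms to the left-hand side.

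The main obstacle is the bookkeeping that separates \emph{gradient-like} $\|\nabla F(\mathbf{w}^{(t,0)})\|^2$ pieces (which must end up on the left-hand side and produce the $1-3A-W_D(1-A)$ coefficient) from \emph{discrepancy-like} $d_k$ pieces (which populate $T_2$ and $T_5$). The explicit factor of $2$ in $W_D = 2\sum_k (n_k-p_k)^2$ is a telltale hint that a bound of the form $\|a+b\|^2\le 2\|a\|^2+2\|b\|^2$, or equivalently $-2\langle a,b\rangle\le \|a\|^2+\|b\|^2$, is applied once at the decomposition step; getting $(1-A)$ versus $A$ factors to line up with the statement then requires tracking which $\eta^2$ contributions originate from the descent lemma and which from the drift lemma. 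This constant-hunting, rather than any conceptual difficulty, is where the proof will require the most care.
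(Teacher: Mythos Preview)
Your outline is correct and matches the paper's argument in all essential respects: smoothness descent, variance separation via client independence, the $(p_k-n_k)$ decomposition bounded by Cauchy--Schwarz plus \cref{ass_diss} to produce the $W_D$ terms, a self-referential drift recursion yielding the $A=2\tau(\tau-1)\eta^2L^2$ factor, and final telescoping.

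The one tactical difference is in how the first-order term is opened up. The paper applies the polarization identity $2\langle a,b\rangle=\|a\|^2+\|b\|^2-\|a-b\|^2$ to $\langle \nabla F(\mathbf{w}^{(t,0)}),\sum_k p_k\mathbf{h}_k\rangle$, which produces a \emph{positive} $\tfrac12\|\sum_k p_k\mathbf{h}_k\|^2$ that combines with the deterministic part of the second-order term into $-\tfrac{\tau\eta}{2}(1-2\tau\eta L)\|\sum_k p_k\mathbf{h}_k\|^2$; under $\eta L\le\frac{1}{2\tau}$ this is nonpositive and simply discarded. All mismatch and drift then live in the single residual $\|\nabla F(\mathbf{w}^{(t,0)})-\sum_k p_k\mathbf{h}_k\|^2$, which is split once via $\|a+b\|^2\le 2\|a\|^2+2\|b\|^2$ (this is the origin of the factor $2$ in $W_D$, as you guessed). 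Your plan---decompose $\sum_k p_k\nabla F_k$ first and handle the deterministic second-order piece through drift directly---is a valid reorganization, but the polarization trick is what makes the paper's constants fall out cleanly; if you proceed with Young's inequality alone you may need to tune the splitting parameter to reproduce exactly $1-3A-W_D(1-A)$.
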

\begin{proof}
    See details in~\cref{append_proof_theorem}.
\end{proof}

Generally, a tighter bound corresponds to a better optimization result. Thus, we explore the effects of $p_k$ on upper bound. In~\cref{theorem_1}, there are four parts related to $p_k$. First, we see that larger difference between $p_k$ and $n_k$ contributes to larger $W_D$ and thus smaller denominator in $T_0$ and larger value in $T_2$, which tends to loose the bound. As for $T_5$, by setting $p_k$ negatively correlated to $d_k$, when clients have different level of discrepancy, i.e. different $d_k$, $T_5$ will be further reduced, which tends to tight the bound.

Therefore, there could be an optimal set of $\{p_k | k \in [K]\}$ that contributes to the tightest bound, where an optimal $p_k$ should be correlated to both $n_k$ and $d_k$. This theoretically show that dataset size could be not the optimal indicator and that discrepancy level can be a reasonable complementary indicator for determining aggregation weight.

\paragraph{Upper bound minimization.}

In the following, we derive the analytical expression of aggregation weight ($p_k$) by minimizing the upper bound in~\cref{theorem_1}. To minimize this upper bound, directly solving the minimization results in a complicated expression, which involves too many unknown hyper-parameters in practice. To simplify the expression, we convert the original objective from minimizing $(T_1+T_2+T_3+T_4+T_5)/T_0$ to minimizing $T_1+T_2+T_3+T_4+T_5 - \lambda T_0$, where $\lambda$ is a hyper-parameter. The converted objective still promotes maximization of $T_0$ and minimization of $T_1+T_2+T_3+T_4+T_5$, and still contributes to tighten the bound $(T_1+T_2+T_3+T_4+T_5)/T_0$ (also see analysis in~\cref{sec:numerical_simulation}). Then, our discrepancy-aware aggregation weight is obtained through solving the following optimization problem:
\begin{align}
\label{eq:reformulated}
\mathop{\min}_{\{p_k\}} & \frac{2(1-A)[F(\mathbf{w}^{(0,0)})-F_{inf}]}{\tau \eta T} + \frac{(1-A)W_DB}{K}\sum_{k=1}^Kd_k \notag\\
& + 2(1-A)L\eta \sigma^2 \sum_{k=1}^K p_k^2 + 2(\tau-1)\sigma^2 L^2 \eta^2 \notag\\ 
& + 2AB\sum_{k=1}^K p_k d_k -
\lambda \left( 1-3A-W_D(1-A) \right), \notag\\
{\rm s.t.} & \sum_{k} p_k =1,~p_k \geq 0,
\end{align}
from which we derive the concise expression of an optimized aggregation weight (see details of derivation in~\cref{append_proof_obtain_pk}):
\begin{equation}
\label{eq:disco_weight}
    p_k \propto n_k - a \cdot d_k + b,
\end{equation}
where $a, b$ are two constants. This suggests that for a tighter upper bound, the aggregation weight $p_k$ should be correlated with both dataset size $n_k$ and local discrepancy level $d_k$. This expression of Disco aggregation weight can mitigate the limitation of standard dataset size weighted aggregation by assigning larger aggregation weight to client with larger dataset size and smaller discrepancy level.

\begin{figure*}[t]
\centering
\includegraphics[width=0.9\textwidth]{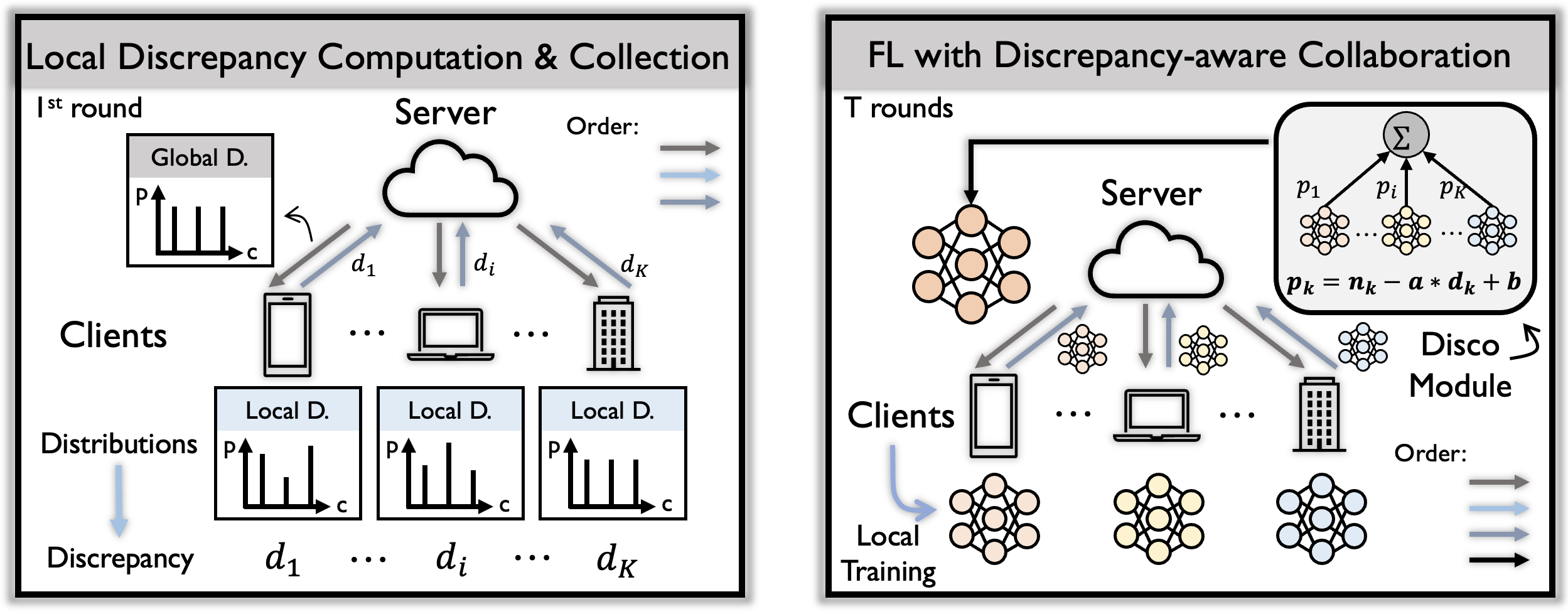}
\caption{The overview of federated learning with discrepancy-aware collaboration. The left shows the acquisition of clients' discrepancy by comparing global and local category distribution, which is only required at the first round. The right shows the federated learning process with discrepancy-aware collaboration by adjusting each client's aggregation weight based on dataset size and discrepancy level.}
\label{figure:overview}
\vskip -0.2in
\end{figure*}

\section{FL with Discrepancy-Aware Collaboration}
\label{sec:method}

Motivated by these empirical and theoretical observations, we propose federated learning with discrepancy-aware collaboration (FedDisco).

\subsection{Framework} 
FedDisco involves four key steps: local~model training, local discrepancy computation, discrepancy-aware aggregation weight determination and global model aggregation. Steps 1 and 4 follows from the standard federated learning; meanwhile, Steps 2 and 3 integrates the discrepancy between local and global category distributions into aggregation weights. The overview is shown on the right of Figure~\ref{figure:overview}. We also provide an algorithm flow in~\cref{alg:feddisco}.

\textbf{Local~model~training.} Each client $k$ performs $\tau$ steps of local training based on dataset $\mathcal{B}_k$ and initial model $\mathbf{w}^{(t,0)}$ to obtain the trained model, denoted as 
$ \mathbf{w}_k^{(t,\tau)} = LocalTrain(\mathcal{B}_k,\mathbf{w}^{(t,0)}).$

\textbf{Local discrepancy computation.} Each client needs to compute the discrepancy between its local category distribution and the hypothetically aggregated global category distribution. Here we consider that the  global category distribution is uniform, since it naturally promotes the fairness across all categories and the generalization capability of the global model. Moreover, each local client can compute its discrepancy without additional data sharing, preventing information leakage of category distribution. Concretely, $\mathbf{D}_k$ and $\mathbf{T}$ denote local and global category distribution. Thus, we set all elements $\mathbf{T}_c = {1}/{C}$ to treat all categories equally. Given the global distribution $\mathbf{T}$, each client $k$ can calculate its local discrepancy level $d_k \in \mathbb{R}$ by evaluating the difference between these two distributions: 
$
d_k = e(\mathbf{D}_k,\mathbf{T}),
$
where $e(\cdot)$ is a pre-defined metric function (e.g. L2 difference or KL-Divergence). Applying L2 difference corresponds to $d_k=\sqrt{\sum_{c=1}^C (\mathbf{D}_{k,c}-\mathbf{T}_c)^2}$. Finally, the server collects clients' discrepancy levels, see the left of Figure~\ref{figure:overview}. Note that though we are more interested in uniform target distribution for the sake of category-level fairness, our method is also applicable to scenarios where target distribution is imbalanced; see experiments in~\cref{table:imb_test}.

\textbf{Discrepancy-aware~aggregation~weight~determination.} Motivated by the previous empirical and theoretical observations, we propose to determine more distinguishing aggregation weights for each client $k$ by leveraging dataset relative size $n_k$ and local discrepancy level $d_k$ (derived from~\eqref{eq:disco_weight}):
\begin{equation}
\label{eq:disco_real}
    p_k = \frac{{\rm ReLU}(n_k - a \cdot d_k + b) }{\sum_{m=1}^K{\rm ReLU}(n_m - a \cdot d_m + b)},
\end{equation}
where ReLU$(\cdot)$ is the relu function to take care of negative values~\cite{fukushima1975cognitron}, $a$ is a hyper-parameter to balance $n_k$ and $d_k$, $b$ is another hyper-parameter to adjust the weight. This Disco aggregation can determine more distinguishing aggregation weights for clients by assigning larger weight for clients with larger dataset sizes and smaller local discrepancy levels.

\textbf{Global model aggregation.} The server conducts model aggregation based on the above discrepancy-aware aggregation weights. The global model is $\mathbf{w}^{(t+1)}=\sum_{k=1}^K p_k \mathbf{w}_k^{(t)},$
where $\mathbf{w}_k^{(t)}$ is the $k$th local model.

\subsection{Discussions}

\textbf{Privacy.} 
Our proposed FedDisco does not leak client's exact distribution (Figure~\ref{figure:overview}) since the discrepancy is calculated at the client side. The server collects discrepancy level of clients, from which the exact category distribution can not be inversely inferred. This indicates that this process is more privacy-preserving compared with several existing works, such as FedFTG and CCVR~\cite{fedftg,nofear}, which transmitting exact category distribution.

\textbf{Communication \& computation efficiency.} Discrepancy communication is only required at the \emph{first communication round}. Besides, the discrepancy is only a numerical value, which is negligible compared with the model communication.  The discrepancy calculation is only a simple operation between two vectors, which is negligible compared with model training. Moreover, FedDisco requires 
much less computation overhead and no computation burden to the server; while previous methods, such as FedDF~\cite{feddf} and FedFTG~\cite{fedftg}, require additional model fine-tuning by simultaneously running $K$ local models at the server side for \emph{every communication round}.

\textbf{Modularity.} 
The modularity of FedDisco indicates its broad range of application. Typically, federated learning involves four steps: (1) global model downloading, (2) local updating, (3) local model uploading and (4) model aggregation. Our FedDisco focuses on step 4, which suggests that it can be easily combined with existing works conducting correction in step 2 and compression in step 1 and 3. Beyond this, it could also be incorporated with methods in step 4 by adjusting aggregation weight to be negatively correlated with discrepancy, such as conducting Disco re-weighting before normalization in FedNova~\cite{fednova}.

\begin{table*}[t]
\caption{Accuracy comparisons (mean $\pm$ std on $5$ trials, \%) on several heterogeneous settings and datasets. Experiments show that FedDisco consistently outperforms these state-of-the-art methods.}
\label{table:main}
\setlength\tabcolsep{5pt}
\vskip 0.15in
\begin{center}
\begin{small}
\begin{sc}
\begin{tabular}{cccccccc}
\toprule
\multirow{2}{*}{Method} & HAM10000 & \multicolumn{2}{c}{CIFAR-10} & \multicolumn{2}{c}{CINIC-10} & \multicolumn{2}{c}{Fashion-MNIST} \\
& NIID-1 & NIID-1  & NIID-2 & NIID-1 & NIID-2 & NIID-1& NIID-2 \\
\midrule
FedAvg  & 42.54$\pm 0.59$ & 68.47$\pm 0.20$ & 65.60$\pm 0.16$ & 54.24$\pm 0.18$ & 50.35$\pm 0.43$ & 89.26$\pm 0.09$ & 86.46$\pm 0.03$ \\
FedAvgM & 42.54$\pm 0.45$ & 68.59$\pm 0.32$ & 66.01$\pm 0.25$ & 54.00$\pm 0.44$ & 50.23$\pm 0.64$ & 89.31$\pm 0.08$ & 87.06$\pm 0.11$ \\
FedProx & 44.76$\pm 1.35$ & 69.33$\pm 0.49$ & 65.61$\pm 0.31$ & 56.38$\pm 0.35$ & 50.30$\pm 0.44$ & 89.28$\pm 0.12$ & 87.24$\pm 0.21$ \\
SCAFFOLD& 55.08$\pm 0.23$ & 71.09$\pm 0.24$ & 66.93$\pm 0.17$ & 57.47$\pm 0.35$ & 53.50$\pm 0.43$ & 89.65$\pm 0.07$ & 86.87$\pm 0.24$ \\
FedDyn  & 54.44$\pm 0.98$ & 70.14$\pm 0.31$ & 68.49$\pm 0.62$ & 56.41$\pm 0.41$ & 52.69$\pm 0.52$ & 89.12$\pm 0.13$ & 86.38$\pm 0.43$ \\
FedNova & 44.92$\pm 0.59$ & 68.57$\pm 0.19$ & 65.61$\pm 0.39$ & 54.27$\pm 0.20$ & 50.37$\pm 0.73$ & 89.05$\pm 0.08$ & 86.47$\pm 0.20$ \\
MOON    & 45.87$\pm 0.90$ & 67.42$\pm 0.14$ & 66.25$\pm 0.35$ & 52.08$\pm 0.20$ & 50.42$\pm 0.56$ & 89.29$\pm 0.04$ & 86.44$\pm 0.04$ \\
FedDC   & 54.48$\pm 0.77$ & 70.93$\pm 0.15$ & 67.89$\pm 0.44$ & 57.26$\pm 0.26$ & 52.43$\pm 0.60$ & 89.19$\pm 0.02$ & 86.57$\pm 0.06$ \\
\textbf{FedDisco}  & \textbf{59.05}$\pm 0.67$ & \textbf{72.05}$\pm 0.24$ & \textbf{69.85}$\pm 0.18$ & \textbf{58.07}$\pm 0.15$ & \textbf{53.84}$\pm 0.08$ & \textbf{89.74}$\pm 0.07$ & \textbf{87.85}$\pm 0.20$ \\
\bottomrule
\end{tabular}
\end{sc}
\end{small}
\end{center}
\vskip -0.2in
\end{table*}

\begin{table*}[t]
\caption{Modularity. Each entry shows accuracy of baseline with Disco (accuracy \textbf{difference} compared with baseline without Disco in Table~\ref{table:main}). Experiments show consistent improvements across settings, indicating the modularity of FedDisco.}
\label{table:modularity}
\setlength\tabcolsep{3pt}
\vskip 0.15in
\begin{center}
\begin{small}
\begin{sc}
\begin{tabular}{cccccccc}
\toprule
\multirow{2}{*}{\textbf{+ Disco}} & HAM10000 & \multicolumn{2}{c}{CIFAR-10} & \multicolumn{2}{c}{CINIC-10} & \multicolumn{2}{c}{Fashion-MNIST}\\
    & NIID-1 & NIID-1  & NIID-2 & NIID-1 & NIID-2 & NIID-1& NIID-2\\
\midrule
FedAvg   & 50.95 (\textbf{+8.41}) & 70.05 (\textbf{+1.58}) & 68.30 (\textbf{+2.70}) & 54.81 (\textbf{+0.57}) & 52.46 (\textbf{+2.11}) & 89.56 (\textbf{+0.30}) & 87.56 (\textbf{+1.10}) \\
FedavgM   & 50.00 (\textbf{+7.46}) & 70.07 (\textbf{+1.48}) & 67.73 (\textbf{+1.72}) & 54.69 (\textbf{+0.69}) & 51.91 (\textbf{+1.68}) & 89.36 (\textbf{+0.05}) & 87.46 (\textbf{+0.40}) \\
FedProx  & 50.48 (\textbf{+5.72}) & 70.68 (\textbf{+1.35}) & 68.33 (\textbf{+2.72}) & 56.93 (\textbf{+0.55}) & 52.62 (\textbf{+2.32}) & 89.58 (\textbf{+0.27}) & 87.72 (\textbf{+0.48}) \\
SCAFFOLD  & 57.62 (\textbf{+2.54}) & 71.70 (\textbf{+0.61}) & 69.10 (\textbf{+2.17}) & 58.05 (\textbf{+0.58})  & 53.82 (\textbf{+0.32}) & 89.74 (\textbf{+0.09}) & 87.85 (\textbf{+0.98}) \\
FedDyn  & 59.05 (\textbf{+4.61}) & 72.05 (\textbf{+1.91}) & 69.85 (\textbf{+1.36}) & 58.07 (\textbf{+1.66}) & 53.84 (\textbf{+1.15}) & 89.31 (\textbf{+0.19}) & 87.18 (\textbf{+0.80}) \\
FedNova  & 51.43 (\textbf{+6.51}) & 70.04 (\textbf{+1.47}) & 67.83 (\textbf{+2.22}) & 55.04 (\textbf{+0.77}) & 52.23 (\textbf{+1.86}) & 89.28 (\textbf{+0.23}) & 87.52 (\textbf{+1.05}) \\
MOON   & 52.86 (\textbf{+6.99}) & 68.79 (\textbf{+1.37}) & 68.35 (\textbf{+2.10}) & 53.26 (\textbf{+1.18}) & 51.97 (\textbf{+1.55}) & 89.50 (\textbf{+0.21}) & 87.59 (\textbf{+1.15}) \\
FedDC   & 58.25 (\textbf{+3.77}) & 71.96 (\textbf{+1.03}) & 68.94 (\textbf{+1.05}) & 57.70 (\textbf{+0.44}) & 53.25 (\textbf{+0.82}) & 89.51 (\textbf{+0.32})& 87.11 (\textbf{+0.54}) \\
\bottomrule
\end{tabular}
\end{sc}
\end{small}
\end{center}
\vskip -0.2in
\end{table*}

\begin{figure*}[t]  
	\centering
	\subfigure[MOON, NIID-1, CIFAR-10]{
		\includegraphics[width=0.23\textwidth]{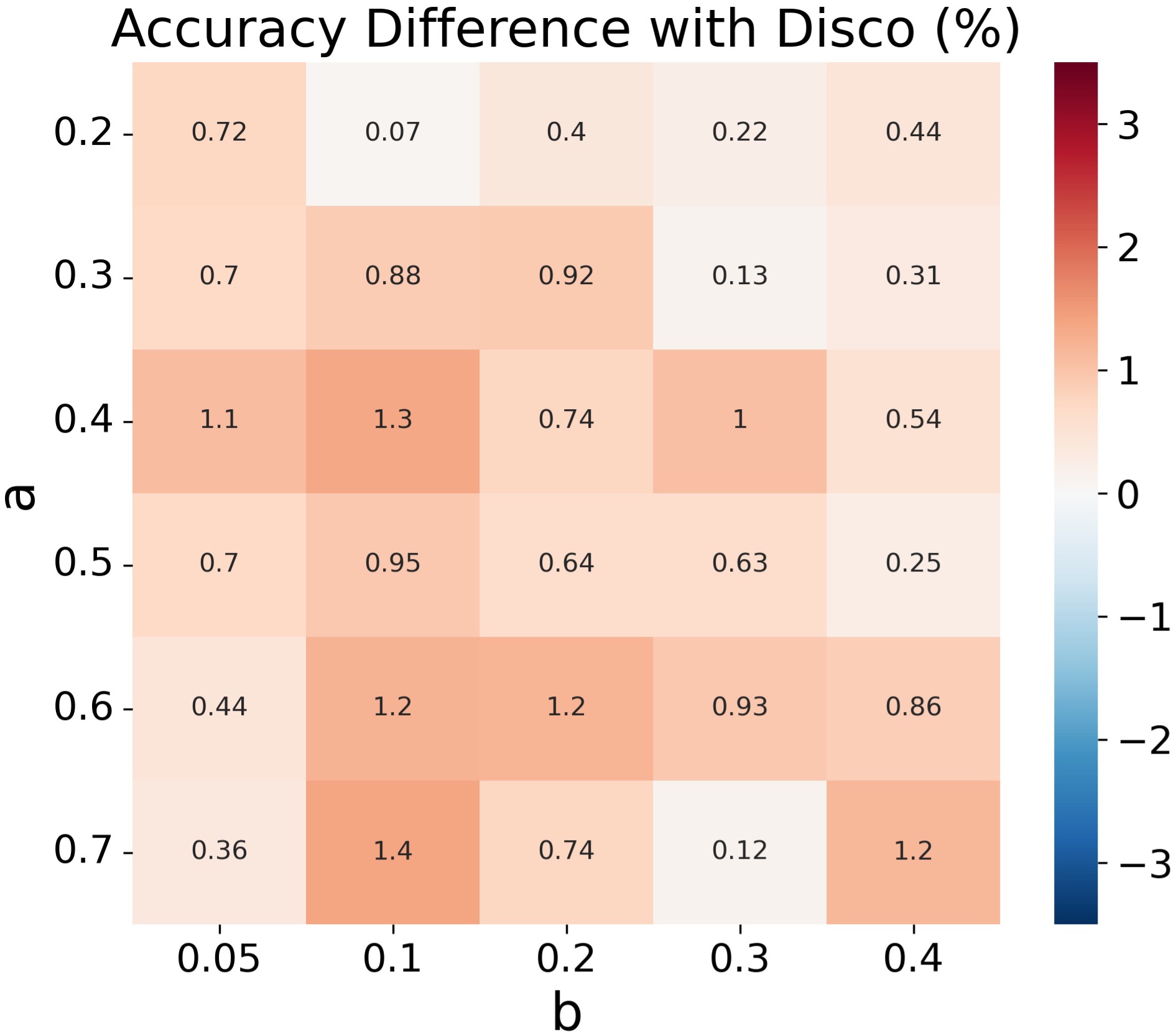}
	}
	\subfigure[FedAvg, NIID-1, CIFAR-10]{
		\includegraphics[width=0.23\textwidth]{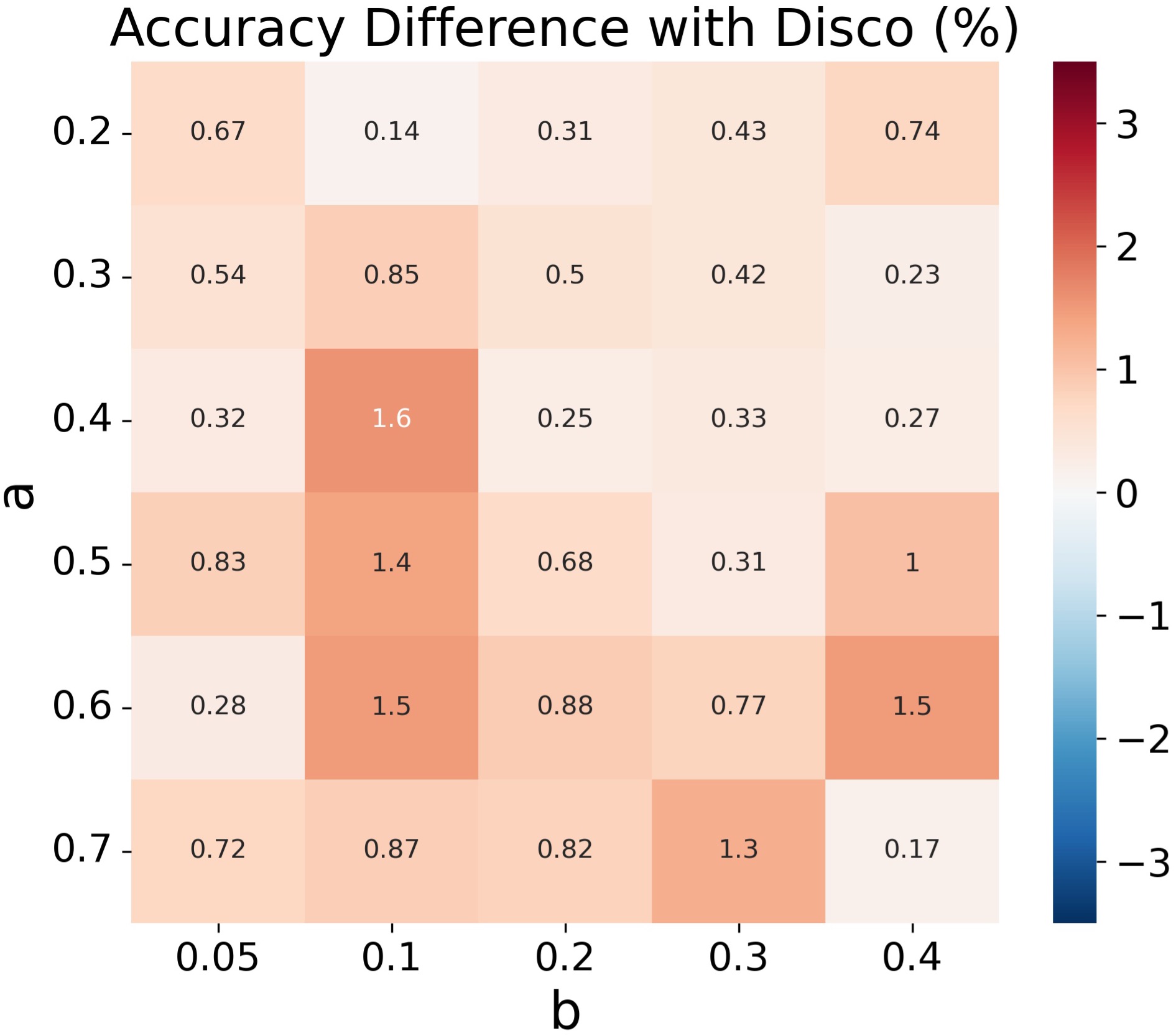}
	}
	\subfigure[FedAvg, NIID-2, CIFAR-10]{
		\includegraphics[width=0.23\textwidth]{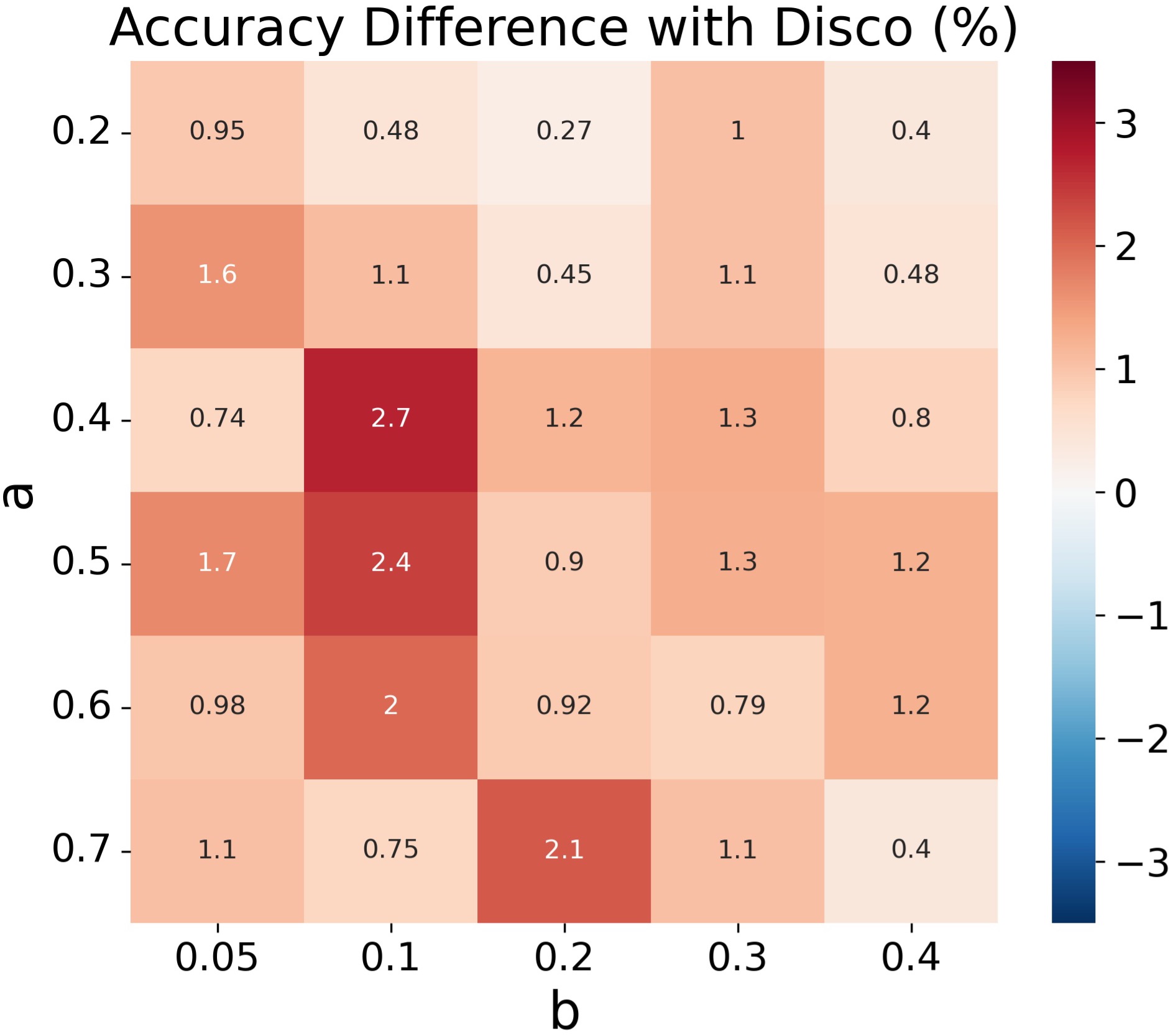}
	}
	\subfigure[FedAvg, NIID-2, CINIC10]{
		\includegraphics[width=0.23\textwidth]{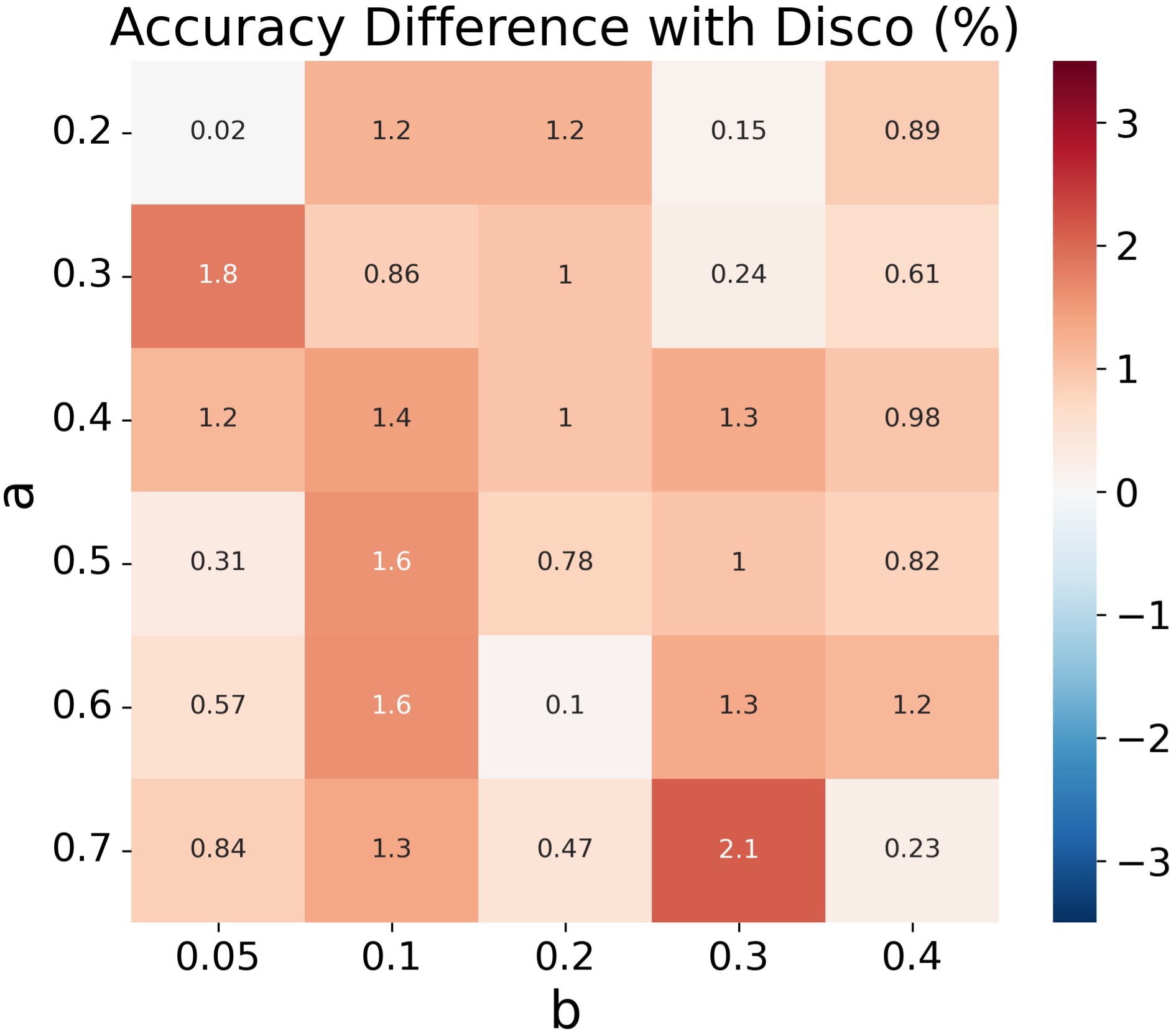}
	}
    \vskip -0.1in
	\caption{Ease of hyper-parameters tuning. Experiments show that our Disco works for a wide range of hyper-parameters.}
	\label{figure:heatmaps}	
\vskip -0.2in
\end{figure*}

\section{Related Works}
\label{sec:related_works}
Federated learning (FL) has been an emerging topic~\cite{fedavg,8859260}. However, data distribution heterogeneity may significantly degrades FL's performance~\cite{zhao2018federated,advances,li2019convergence}. Works that focus on this can be mainly divided into two directions: local and global model adjustment.

\subsection{Local Model Adjustment}
Methods in this direction conduct the adjustment on local model training at the client side, aiming at producing local models with smaller difference~\cite{fedprox,scaffold}. FedProx~\cite{fedprox} regularizes the $\ell_2$-distance between local and global model. FedDyn~\cite{feddyn} proposes a dynamic regularizer to align the local and global solutions. SCAFFOLD~\cite{scaffold} and FedDC~\cite{Gao_2022_CVPR} introduce control variates to correct each client's gradient but require double communication cost. MOON~\cite{moon} aligns the features of local and global model; FedFM~\cite{fedfm} aligns category-wise feature space across clients, contributing to more consistent feature spaces.

All of these methods simply assign aggregation weight based on local dataset size, which is not sufficient to distinguish each client's contribution. While our proposed FedDisco leverages both dataset size and discrepancy between local and global category distributions to determine more distinguishing aggregation weights, which has strong ability of plug-and-play that can be easily incorporated with these methods to further enhance the overall performance.

\subsection{Global Model Adjustment}
Methods in this direction conducts the adjustment on global model at the server side, aiming at produce a global model with better performance~\cite{feddf,reddi2021adaptive,li2023revisiting,zhang2023federated,fan2022fedskip}. FedNova~\cite{fednova} conducts re-weighting to target system heterogeneity while we conduct re-weighting based on local discrepancy level to target data heterogeneity. CCVR~\cite{nofear} conducts post-calibration, FedAvgM~\cite{fedavgm} and FedOPT~\cite{reddi2021adaptive} introduce global model momentum to stabilize FL training; while these methods are orthogonal to FedDisco and can be easily combined. FedDF~\cite{feddf} and FedFTG~\cite{fedftg} introduce an additional fine-tuning step to refine the global model; FedGen~\cite{fedgen} learns a feature generator for assisting local training. However, these methods require great computation capability of the server with much more computation cost (e.g. FedFTG~\cite{fedftg} requires twice computation cost compared with FedAvg~\cite{fedavg}). As a contrast, our FedDisco brings no computation burden to the server and works with negligible computation overhead.

\section{Experiments}

We show key experimental setups and results in this section. More details and results are in~\cref{sec:append_experiments}.

\subsection{Experimental Setup}

\textbf{Datasets.} We consider five image classification datasets to cover medical, natural and artificial scenarios, including HAM10000~\cite{tschandl2018ham10000}, CIFAR-10 \& CIFAR-100~\cite{cifar10}, CINIC-10~\cite{darlow2018cinic} and Fashion-MNIST~\cite{xiao2017fashion}; and AG News~\cite{zhang2015character}, a text classification dataset.

\textbf{Federated scenarios.} We consider two data distribution heterogeneous settings, termed NIID-1 and NIID-2. NIID-1 follows Dirichlet distribution~\cite{fedma,moon} $Dir_{\beta}$, where $\beta$ (default $0.5$) is an argument correlated with heterogeneity level. We consider $10$ clients for NIID-1. NIID-2 is a more heterogeneous setting consists of $5$ biased clients (each has data from ${C}/{5}$ categories~\cite{fedavg,fedprox}) and $1$ unbiased client (has all $C$ categories), where $C$ is the total category number.

\textbf{Implementation details.} The number of local epochs and batch size are $10$ and $64$, respectively. We run federated learning for $100$ rounds. We use ResNet18~\cite{resnet} for HAM10000, a simple CNN network for other image datasets and TextCNN~\cite{zhang2015sensitivity} for AG News. We use SGD optimizer with a $0.01$ learning rate. We evaluate the accuracy on the global testing set. We use KL-Divergence to measure the discrepancy.

\textbf{Baselines.} We compare FedDisco with eight representative baselines. Among these, 1) FedAvg~\cite{fedavg} is the pioneering FL method; 2) FedProx~\cite{fedprox}, SCAFFOLD~\cite{scaffold}, FedDyn~\cite{feddyn}, MOON~\cite{moon}, and FedDC~\cite{Gao_2022_CVPR} focus on local model adjustment; 3) FedAvgM~\cite{fedavgm} and FedNova~\cite{fednova} focus on global model adjustment. The tuned hyper-parameters are shown in~\cref{sec:append_hyper_parameter}.

\subsection{Main Results}

On four standard datasets and two types of heterogeneity, we compare FedDisco with state-of-the-art algorithms, show the modularity of FedDisco, and explore FedDisco's broader scope of applications.

\textbf{Performance: state-of-the-art accuracy.} We compare the accuracy of several state-of-the-art methods and our proposed FedDisco on multiple heterogeneous settings and datasets in Table~\ref{table:main}. Note that HAM10000 is an imbalanced dataset and thus is not applicable for NIID-2. The local training protocol used in FedDisco is FedDyn except SCAFFOLD for Fashion-MNIST. Experiments show that i) our proposed FedDisco consistently outperforms others across different settings, indicating the effectiveness of our proposed method; ii) FedDisco achieves significantly better on NIID-1 setting of HAM10000 ($\beta=5$), which is a more difficult task for its severe heterogeneity and imbalance.

\textbf{Modularity: improvements over baselines.} One key advantage of our proposed FedDisco is its modularity, that is, it can be a plug-and-play module in many existing FL methods to further improve their performance. Following the experiments in~\cref{table:main}, we report the accuracy of baselines combined with our Disco module and the accuracy difference (in parentheses) compared with baselines without Disco in~\cref{table:modularity} (see CIFAR-100 in~\cref{table:cifar100}). Experiments show that i) Disco consistently enhances the performance of state-of-the-art methods under different datasets and heterogeneity types; ii) for the most difficult task (i.e., HAM10000), FedDisco brings the largest performance improvement. Specifically, it achieves $19.8\%$ relative accuracy improvement over FedAvg~\cite{fedavg}.

\begin{table}[t]
\caption{Performance under partial client participation scenario without (w.o.) and with our proposed Disco module. FedDisco not only brings accuracy improvement, but also speeds up training.}
\label{table:partial}
\setlength\tabcolsep{2pt}
\vskip 0.15in
\begin{center}
\begin{small}
\begin{sc}
\begin{tabular}{ccccc}
\toprule
\multirow{2}{*}{Method} & \multicolumn{2}{c}{Acc ($\%$)} & \multicolumn{2}{c}{Rounds $\rightarrow 55\%$} \\
  & W.o. & With Disco & W.o. & With Disco \\
\midrule
FedAvg   & 54.27 & 61.59 ($\boldsymbol{\uparrow 7.32}$) & 58 & 20 ($\boldsymbol{\downarrow 65.52\%}$) \\
FedAvgM  & 57.75 & 60.65 ($\boldsymbol{\uparrow 2.90}$) & 36 & 20 ($\boldsymbol{\downarrow 44.44\%}$) \\
FedProx  & 55.50 & 60.19 ($\boldsymbol{\uparrow 4.69}$) & 44 & 20 ($\boldsymbol{\downarrow 54.55\%}$) \\
SCAFFOLD & 60.43 & 64.10 ($\boldsymbol{\uparrow 3.67}$) & 34 & 20 ($\boldsymbol{\downarrow 41.18\%}$) \\
FedDyn   & 59.44 & 62.53 ($\boldsymbol{\uparrow 3.09}$) & 33 & 27 ($\boldsymbol{\downarrow 18.18\%}$) \\
FedNova  & 54.11 & 58.13 ($\boldsymbol{\uparrow 4.02}$) & 61 & 24 ($\boldsymbol{\downarrow 60.66\%}$) \\
MOON     & 54.30 & 58.79 ($\boldsymbol{\uparrow 4.49}$) & 56 & 24 ($\boldsymbol{\downarrow 57.14\%}$) \\
FedDC    & 61.18 & 62.90 ($\boldsymbol{\uparrow 1.72}$) & 27 & 20 ($\boldsymbol{\downarrow 25.93\%}$) \\
\bottomrule
\end{tabular}
\end{sc}
\end{small}
\end{center}
\vskip -0.2in
\end{table}

\textbf{Applicability to partial client participation scenarios.} Partial client participation is a common scenario in cross-device FL applications where only a subset of clients are available in a specific FL round. To verify that our proposed FedDisco is applicable to this scenario, we sample $10$ out of $60$ clients for each round under NIID-2 on CIFAR-10, which consists of $50$ biased clients and $10$ unbiased clients. We report the averaged accuracy of last $10$ rounds and the number of rounds to reach target accuracy ($55\%$) in~\cref{table:partial}. We see that Disco module significantly i) brings performance gain to baselines (up to $7.32\%$); ii) reduces the communication cost to reach a target accuracy (up to $65.52\%$). These results indicate that FedDisco not only improves the accuracy but also speeds up the training process under this scenario.

\textbf{Applicability to text-modality scenarios.} To verify that our proposed FedDisco can also be applied to text-modality, we explore on a text classification dataset, AG News~\cite{tschandl2018ham10000} under full and partial participation scenarios. Results in~\cref{table:text} show that Disco still consistently improves the baselines on text modality and brings significant performance gain under partial client participation scenario.

\begin{table}[t]
\caption{Results on text classification dataset AG News~\cite{zhang2015character} under full and partial participation. FedDisco consistently brings accuracy improvement over baselines.}
\label{table:text}
\vskip 0.15in
\begin{center}
\begin{small}
\begin{sc}
\begin{tabular}{cccccc}
\toprule
\multirow{2}{*}{Disco?} & \multicolumn{2}{c}{Full} & \multicolumn{2}{c}{Partial} \\
& FedAvg & FedProx & FedAvg & FedProx \\
\midrule
$\times$ & 82.03 & 79.89 & 50.88 & 58.22 \\
$\surd$ & \textbf{84.09} & \textbf{80.81} & \textbf{57.71} & \textbf{62.86} \\
\bottomrule
\end{tabular}
\end{sc}
\end{small}
\end{center}
\vskip -0.2in
\end{table}

\textbf{Applicability to globally imbalanced scenarios.} Here, we verify that FedDisco is also capable of globally imbalanced category distribution scenario, that is the hypothetically aggregated global data is imbalanced. We firstly allocate CIFAR-10 to each category $c$ following an exponential distribution~\cite{cui2019class}: $n_c=n_1 \rho^{-\frac{c-1}{C-1}}$, where $\rho$ denotes the imbalance ratio and $C=10$ is the total category number, $n_1=5000$. The dataset is then distributed to $10$ clients as NIID-1. Note that $\rho=1$ denotes globally balanced, $\rho=20$ is the most imbalanced, where category $1$ has $5000$ samples while category $10$ only has $250$ samples. We consider two scenarios 1) the global category distribution is non-uniform while the distribution of test dataset is uniform; 2) the global category distribution and the distribution of test dataset are similar, and both non-uniform.

1) We conduct experiments on two typical baselines, FedAvg and FedDyn in~\cref{fig:gloabl_imb}. Experiments show that our FedDisco consistently improves the baseline regardless of the globally imbalance level; see more results in~\cref{append:imbalance}.

2) We conduct experiments under two scenarios ($\rho=10$ and $\rho=50$) and report the results in~\cref{table:imb_test}. Experiments show that FedDisco still performs the best when both global and test category distribution are non-uniform; see how to obtain global category distribution in a privacy-preserving way in~\cref{sec:append_obtain_distribution}.

\begin{table*}[t]
\caption{Performance on scenario where global category distribution and test distribution are similar but both non-uniform. Experiments show that FedDisco is also applicable to scenarios where the global and test distributions are non-uniform.}
\label{table:imb_test}
\setlength\tabcolsep{5pt}
\vskip 0.15in
\begin{center}
\begin{small}
\begin{sc}
\begin{tabular}{ccccccccccc}
\toprule
Method & FedAvg & FedAvgM & FedProx & SCAFFOLD & FedDyn & FedNova & MOON & FedDC & \textbf{FedDisco}\\
\midrule
$\rho=10$ & 69.78 & 69.25 & 69.27 & 71.48 & 69.29 & 68.44 & 68.29 & 70.00 & \textbf{71.77}\\
$\rho=50$ & 74.03 & 73.77 & 74.13 & 74.03 & 74.78 & 74.53 & 74.03 & 74.53 & \textbf{76.06}\\
\bottomrule
\end{tabular}
\end{sc}
\end{small}
\end{center}
\vskip -0.2in
\end{table*}

\begin{figure*}[t]
\vskip 0.2in
	\centering
	\subfigure[Globally Imbalance Level]{
		\includegraphics[width=0.23\textwidth]{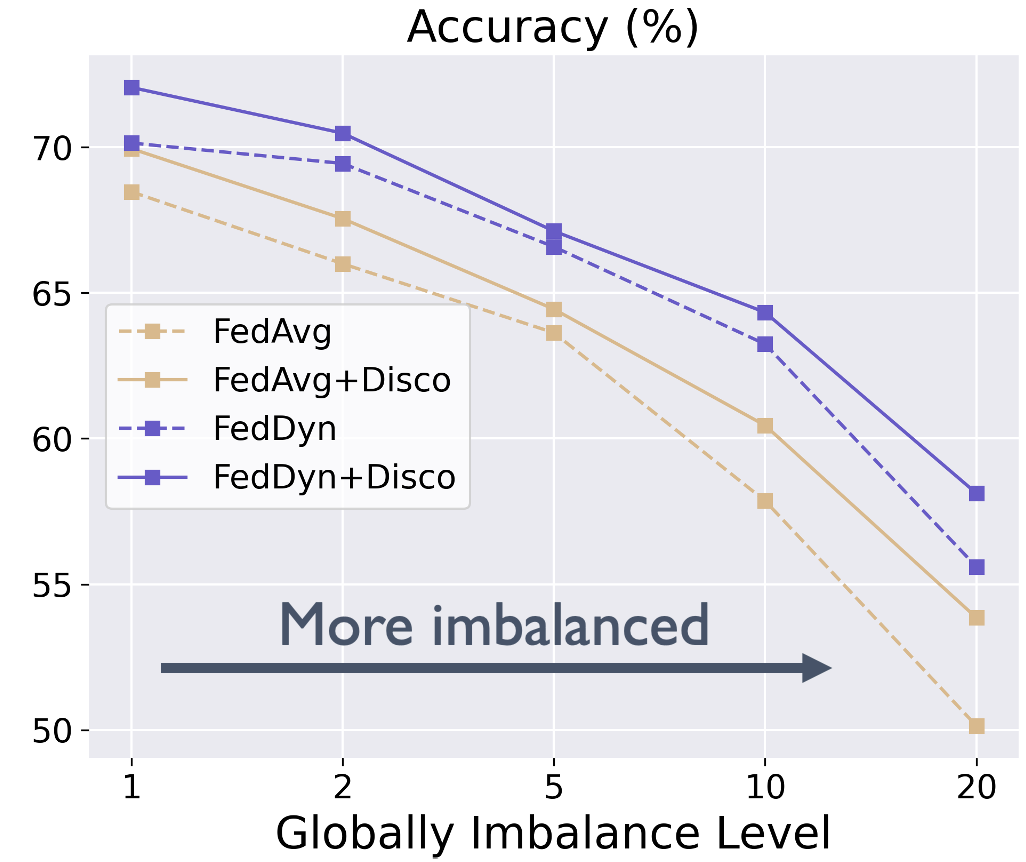}
        \label{fig:gloabl_imb}
	}
	\subfigure[Client Number]{
		\includegraphics[width=0.23\textwidth]{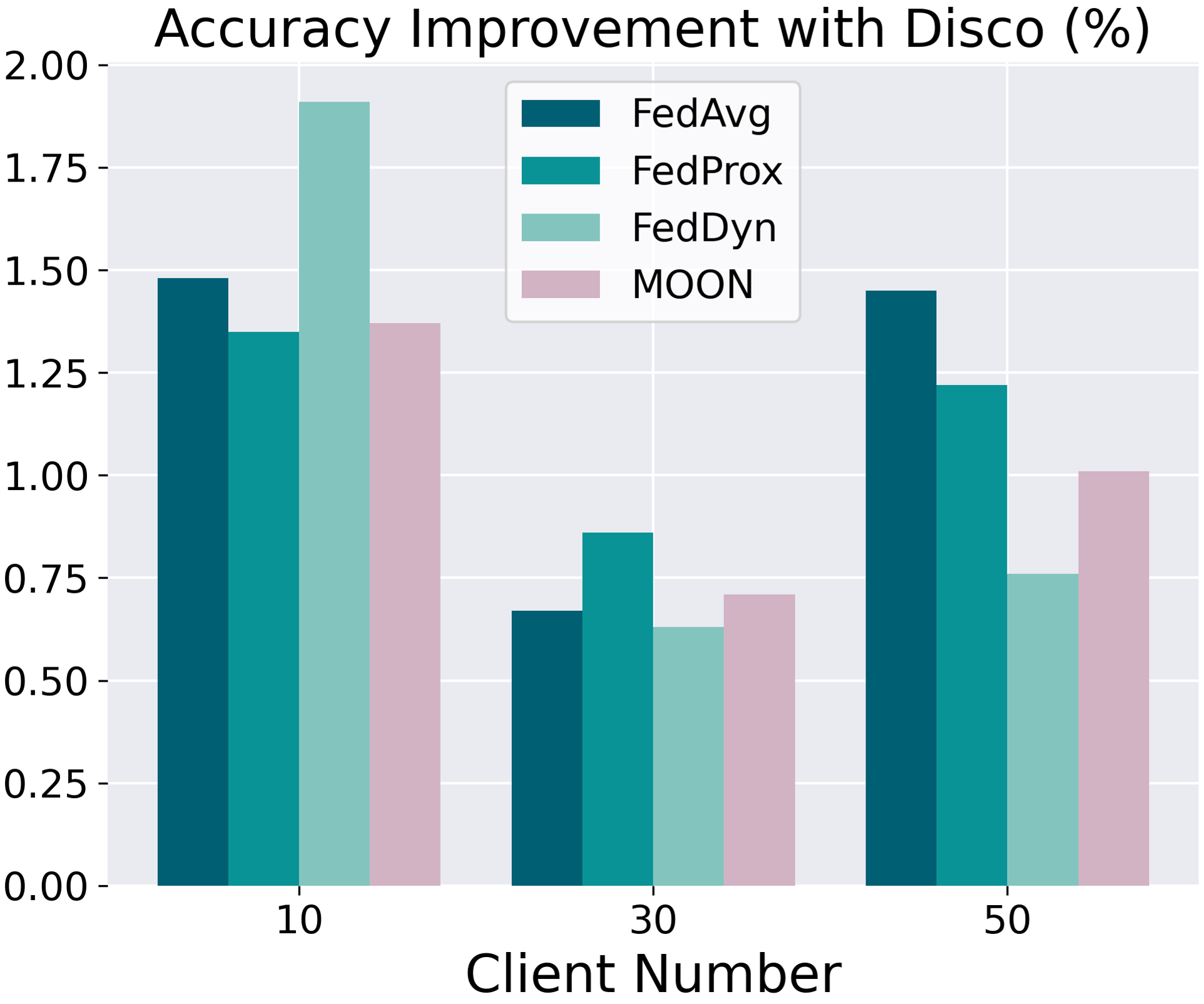}
        \label{fig:abl_client}
	}
	\subfigure[Heterogeneity Level]{
		\includegraphics[width=0.23\textwidth]{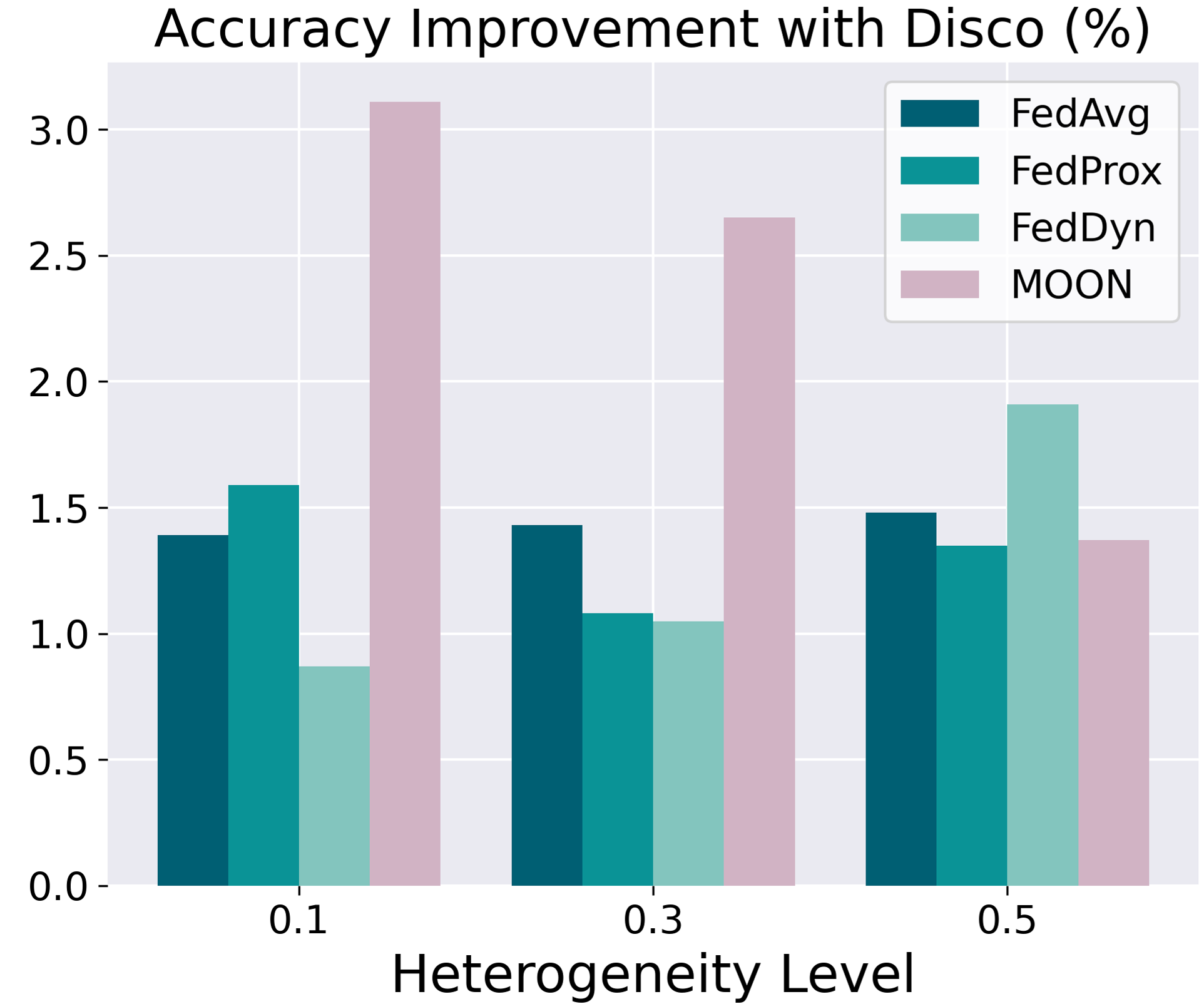}
        \label{fig:abl_hetero}
	}
	\subfigure[Number of Local Epoch]{
		\includegraphics[width=0.23\textwidth]{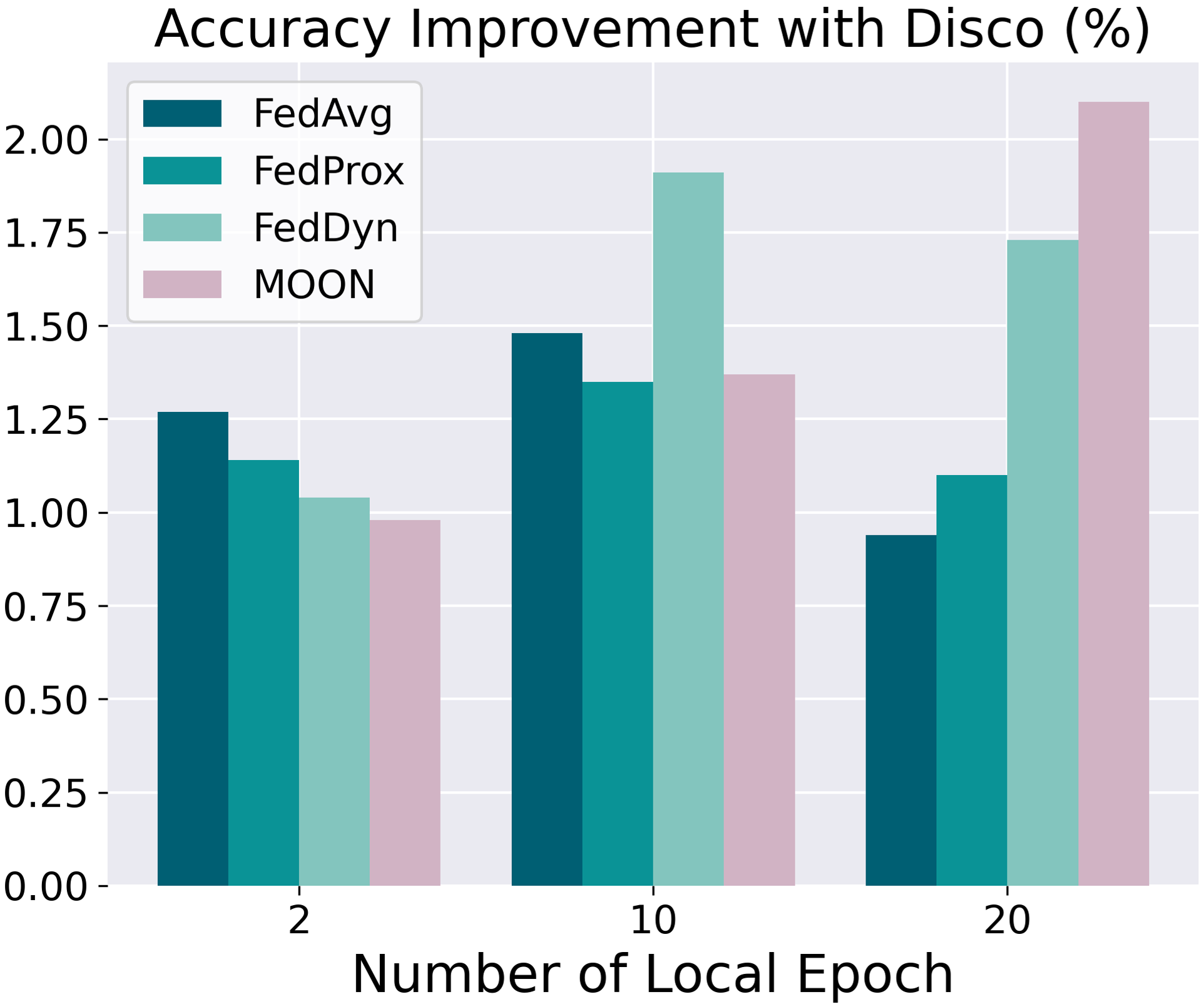}
        \label{fig:abl_epoch}
	}
    \vskip -0.1in
	\caption{Effects of four key FL arguments. Experiments show our Disco consistently brings performance improvement.}
	\label{figure:ablation}	
\vskip -0.2in
\end{figure*}

\subsection{Numerical Simulation of Reformulation}
\label{sec:numerical_simulation}

In~\cref{sec:theory}, we minimize the reformulated the upper bound in~\cref{theorem_1} for obtaining a concise expression, which benefits the practical utility as the tuning efforts are mitigated. Here, we conduct numerical simulation to examine the optimization process of the original form of upper bound in~\cref{theorem_1} and the reformulated form in~\cref{eq:reformulated}. We run $400$ steps of gradient descent on $\{p_k\}$ by minimizing the reformulated form, and record the resulting values of both reformulated and original form in~\cref{table:theory_exp}. Results show that minimizing the reformulated form benefit minimizing the original form, verifying the rationality of such reformulation during derivation.

\begin{table}[t]
\caption{Numerical simulation. Minimizing the reformulated form promotes minimizing the original form.}
\label{table:theory_exp}
\setlength\tabcolsep{2.5pt}
\vskip 0.15in
\begin{center}
\begin{small}
\begin{sc}
\begin{tabular}{cccccc}
\toprule
Step & 0 & 100 & 200 & 300 & 400 \\
\midrule
Reformulated & 0.0684 & 0.0670 & 0.0663 & 0.0660 & 0.0658 \\
Original & 0.0524 & 0.0513 & 0.0507 & 0.0504 & 0.0502 \\
\bottomrule
\end{tabular}
\end{sc}
\end{small}
\end{center}
\vskip -0.2in
\end{table}

\subsection{Ablation Study}

\textbf{Ease of hyper-parameter tuning.} We tune $a$ and $b$ in ~\eqref{eq:disco_real} under four settings to illustrate the ease of hyper-parameter tuning for our proposed FedDisco. For more comprehensive understanding, we consider multiple scenarios, including different baseline methods (FedAvg~\cite{fedavg} and MOON~\cite{moon}), heterogeneity types (NIID-1 and NIID-2) and datasets (CIFAR-10 and CINIC-10). We plot the accuracy difference brought by Disco for each group of hyper-parameters in Figure~\ref{figure:heatmaps}. By comparing three pairs: (a)\&(b), (b)\&(c) and (c)\&(d), we see that i) for a wide range ($a \in [0.2,0.7], b \in [0.05,0.4]$), our Disco brings performance improvement for most cases regardless of the baseline method, heterogeneity type and dataset; ii) generally, $a=0.4 \sim 0.6, b=0.1$ is a safe choice, which leads to stable and great performance.

\textbf{Effects of client number, heterogeneity level and local epoch.} We tune three key arguments in FL, including client number ($K \in \{10, 30, 50\}$), heterogeneity level (smaller value corresponds to more severe heterogeneity, $\beta \in \{0.1,0.3,0.5\}$) and number of local epoch $E \in \{2,10,20\}$, and show the accuracy improvement brought by Disco in~\cref{fig:abl_client}, \ref{fig:abl_hetero}, and \ref{fig:abl_epoch}, respectively. Experiments show that our proposed Disco consistently brings performance improvement across different FL arguments.

\begin{table}[t]
\caption{Effects of discrepancy metric. FedDisco is robust to various choices of discrepancy metrics.}
\label{table:metric}
\vskip 0.15in
\begin{center}
\begin{small}
\begin{sc}
\begin{tabular}{ccccc}
\toprule
Metric & L1 & L2 & Cosine & KL-Divergence \\
\midrule
Acc & 69.43 & 69.99 & 69.90 & 70.05 \\
\bottomrule
\end{tabular}
\end{sc}
\end{small}
\end{center}
\vskip -0.2in
\end{table}

\textbf{Effects of different discrepancy metrics.} Unless specified, we use the KL-Divergence throughout the paper. Though, Table~\ref{table:metric} compares four discrepancy metrics under NIID-1 on CIFAR-10, including L1\&L2 norm, cosine similarity and KL-Divergence. Experiments show that FedDisco with these metrics achieve similar performance, indicating its robustness to different discrepancy metrics.

\section{Conclusions}
This paper focuses on data heterogeneity issue in FL. Through empirical and theoretical explorations, we find that conventional dataset-size-based aggregation manner could be far from optimal. Addressing these, we introduce a discrepancy value as a complementary indicator and propose FedDisco, a novel FL algorithm that assigns larger aggregation weight to client with larger dataset size and smaller discrepancy. FedDisco introduces negligible computation and communication cost, and can be easily incorporated with many methods. Experiments show that FedDisco consistently achieves state-of-the-art performances. 

Though this work mainly explores category-level heterogeneity, we may extend the idea of discrepancy-aware collaboration to other types of heterogeneity, such as feature-level heterogeneity for classification task and mask-level heterogeneity for segmentation task.

\section*{Acknowledgements}

This research is supported by the National Key R\&D Program of China under Grant 2021ZD0112801, NSFC under Grant 62171276 and the Science and Technology Commission of Shanghai Municipal under Grant 21511100900 and 22DZ2229005.


\bibliography{example_paper}
\bibliographystyle{icml2023}

\newpage
\appendix
\onecolumn

\section{Methodology}
\label{append:method}

\subsection{Complementary Description}

\textbf{Notation table.} For convenience, we provide a detailed notation descriptions in~\cref{table:notation}.

\begin{table}[t]
\caption{Notation descriptions.}
\label{table:notation}
\vskip 0.15in
\begin{center}
\begin{small}
\begin{tabular}{l|l}
\toprule
Notation & Description \\ \midrule
$K$ & The total client number in FL system \\
$\tau$ & The number of SGD steps during local model training for each round\\
$\mathcal{B}_k$ & Local private dataset of client $k$ \\
$\mathbf{w}^{(t, r)}_k$ & Local model of client $k$ at round $t$ and iteration $r$ \\
$\mathbf{w}^{(t,0)}$ & Global model at round $t$ \\
$\mathbf{D}_k$ & Local category distribution of client $k$ \\
$\mathbf{D}_{k,c}$ & The $c$-th element of $\mathbf{D}_k$ \\
$\mathbf{T}$ & Global category distribution \\
$n_k$ & The relative dataset size of client $k$ \\
$p_k$ & The aggregation weight of client $k$ \\
$d_k$ & The discrepancy value of client $k$ \\
$F_k(\mathbf{w})$ & Local objective of client $k$ \\
$F(\mathbf{w})$ & Global objective of FL \\
\bottomrule
\end{tabular}
\end{small}
\end{center}
\vskip -0.2in
\end{table}

\textbf{Algorithm table.} We provide the overall algorithm in~\cref{alg:feddisco}.

\begin{algorithm}[t]
   \caption{\colorbox{red!15}{FedDisco: Federated Learning with Discrepancy-Aware Collaboration}}
   \label{alg:feddisco}
\begin{algorithmic}
   \STATE {\bfseries Initialization:} Global model $\mathbf{w}^{(0,0)}$
   \FOR{$k=0$ {\bfseries to} $K-1$}
        \STATE \colorbox{red!15}{Client sends discrepancy value $d_k$ to server}
   \ENDFOR
   \STATE \colorbox{red!15}{Server computes the aggregation weight $p_k$ according to~\cref{eq:disco_real}}
   \FOR{$t=0$ {\bfseries to} $T-1$}
        \STATE Server sends global model $\mathbf{w}^{(t,0)}$ to each client
        \FOR{$k=0$ {\bfseries to} $K-1$}
            \STATE $\mathbf{w}_k^{(t,\tau)} \leftarrow $ local model training for $\tau$ steps of SGD
            \STATE Client sends local model $\mathbf{w}_k^{(t,\tau)}$ to server
        \ENDFOR
        \STATE \colorbox{red!15}{Server aggregates local models $\mathbf{w}^{(t+1,0)}=\sum_{k=1}^K p_k \mathbf{w}_k^{(t,\tau)}$}
   \ENDFOR
\end{algorithmic}
\end{algorithm}

\subsection{Discussions}
\label{append:discussions}

\textbf{Connection with multi-task learning method, Nash-MTL~\cite{nash-mtl}.} Nash-MTL focuses on aggregation weights of multiple tasks and we focus on aggregation weights of multiple clients. We will cite this paper in the revision. However, there are two major differences between Nash-MTL and our work. 1) Nash-MTL focuses on multi-task learning in a centralized manner while we focus on federated learning in a distributed manner.
2) The aggregation weights in Nash-MTL is learned through minimizing a pre-defined problem to search for Nash bargaining solution, which focuses on pair-wise gradient relationships among multiple tasks. In comparison, our aggregation weights is directly computed based on dataset size and discrepancy level, whose design is guided by our empirical and theoretical observation.

\subsection{Obtaining Global Category Distribution in A Privacy-Preserving Manner}
\label{sec:append_obtain_distribution}

In~\cref{sec:method}, we regard the target distribution $\mathbf{T}$ as uniform since we want to emphasize category-level fairness. However, our method is also applicable in scenarios where the global category distribution and test distribution are both non-uniform; see results in~\cref{table:imb_test}. Here, we show that we can obtain the global category distribution in a privacy-preserving way.

Specifically, we can send each client's category distribution $\mathbf{D}_k$ (a vector) to the server using Secure Aggregation~\cite{secure_aggregation} technique such that the server knows the actual global category distribution without knowing each client's actual category distribution. Then, this actual category distribution can be sent to each client and the discrepancy can be calculated (this process is efficient as it only requires once). As a simple example for the secure aggregation process, the distribution vectors of Client 1, 2 are $\mathbf{D}_1$ and $\mathbf{D}_2$. Client 1 adds an arbitrary noise vector $\mathbf{a}$ to $\mathbf{D}_1$ to obtain $\mathbf{D}_1+\mathbf{a}$; while Client 2 substitutes $\mathbf{a}$ from $\mathbf{D}_2$ to obtain $\mathbf{D}_2-\mathbf{a}$. These two transformed vectors are then sent to the server, where the server knows the sum of global category distribution without knowing the exact distribution of each client: $\mathbf{D}_1+\mathbf{a} + \mathbf{D}_2-\mathbf{a} = \mathbf{D}_1 + \mathbf{D}_2$.

\section{Experiments}
\label{sec:append_experiments}

\subsection{Implementation details}

\subsubsection{Environments}

We run all methods by using Pytorch framework~\cite{paszke2019pytorch} on a single NVIDIA GTX 3090 GPU. The memory occupation ranges from 2065MB to 4413MB for diverse datasets and methods.

\subsubsection{Datasets}

We use five image classification datasets, which cover medical, natural and artificial images. For medical image classification dataset, we consider HAM10000~\cite{codella2019skin,tschandl2018ham10000}, a $7$ category classification task for pigmented skin lesions. For natual image classification dataset, we consider CIFAR-10, CIFAR-100 and CINIC-10~\cite{cifar10,darlow2018cinic}, all of which are classification tasks for natural objects with $10$, $100$ and $10$ categories, respectively. For artificial image classification dataset, we consider Fashion-MNIST~\cite{xiao2017fashion}, a $10$ category classification task for clothing. All these four public datasets can be downloaded online. Note that for HAM10000, we hold out a uniform testing set by allocating $30$ samples for each category. We use one text classification dataset, AG News~\cite{zhang2015character}, which is a $4$ classification task.

\subsubsection{Heterogeneity Level}
Here, we illustrate the data distribution over categories and clients under NIID-1 setting of four datasets. As mentioned in the main paper, NIID-1 denotes the setting of Dirichlet distribution $Dir_{\beta}$. We set $\beta=0.5$ for all datasets except HAM10000 and CIFAR-100. For HAM10000, all methods fail at NIID-1 scenario when $\beta$ is too small since HAM10000 is an severely imbalanced dataset. Thus, we choose a moderated $\beta=5.0$. We also choose $\beta=5.0$ for CIFAR-100. We plot the NIID-1 data distribution over categories and clients on four datasets in Figure (\ref{figure:data_distribution}). Note that HAM10000 is globally imbalanced and it has the largest number of samples in class $0$. For AG News, we partition the dataset to $5$ and $50$ clients for full and partial participation scenarios, where $80\%$ biased clients have data samples from $2$ categories and the other $20\%$ uniform clients have data samples from $4$ categories.

\begin{figure}[htbp] 
	\centering
	\subfigure[CIFAR-10]{
		\includegraphics[width=0.23\textwidth]{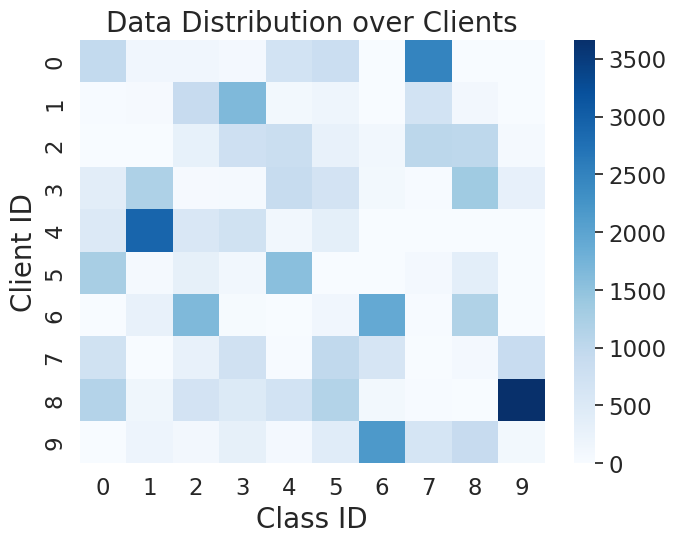}
	}
	\subfigure[CINIC-10]{
		\includegraphics[width=0.23\textwidth]{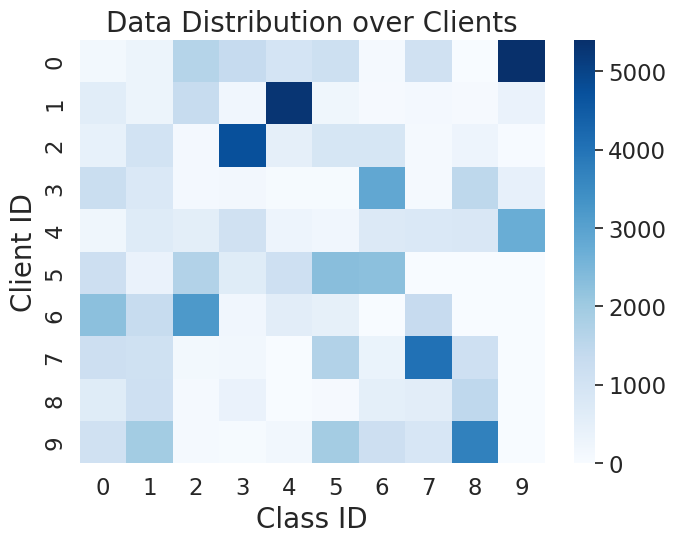}
	}
	\subfigure[Fashion-MNIST]{
		\includegraphics[width=0.23\textwidth]{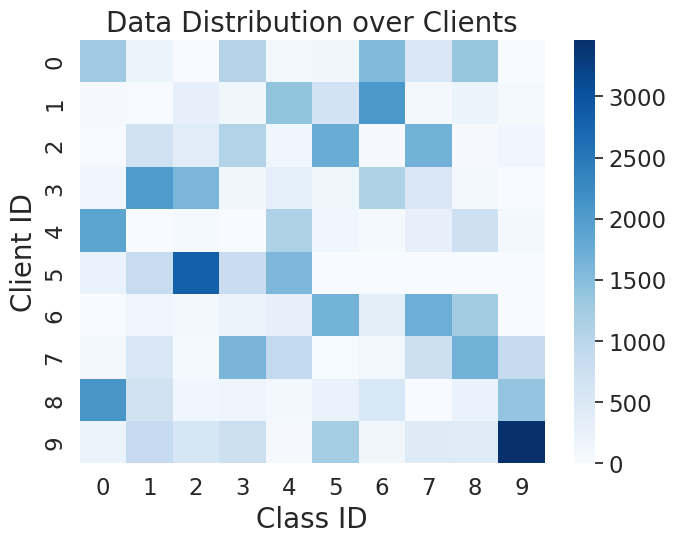}
	}
	\subfigure[HAM10000]{
		\includegraphics[width=0.23\textwidth]{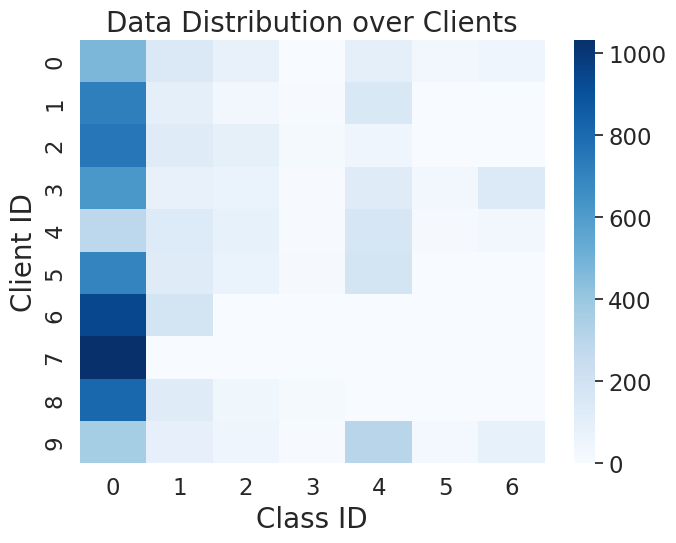}
	}
	\caption{Data distribution over categories and clients under NIID-1 setting.}
	\label{figure:data_distribution}	
\end{figure}

\subsubsection{Models}

For the CIFAR-10, CINIC-10 and Fashion-MNIST datasets, we use a simple CNN network as~\cite{moon,nofear}. The network is sequentially consists of: $5 \times 5$ convolution layer, max-pooling layer, $5 \times 5$ convolution layer, three fully-connected layer with hidden size of $120$, $84$ and $10$ respectively. For the HAM10000 and CIFAR-100 dataset, we use ResNet18~\cite{resnet} in Pytorch library. We replace the first $7 \times 7$ convolution layer with a $3 \times 3$ convolution layer and eliminate the first pooling layer. We also replace the batch normalization layer with group normalization layer as~\cite{feddyn}. For AG News~\cite{zhang2015character}, we use TextCNN model~\cite{zhang2015sensitivity} with a $32$ hidden dimension.

\subsubsection{Hyper-parameters}
\label{sec:append_hyper_parameter}
For all methods, we tune the hyper-parameter in a reasonable range and report the highest accuracy in the paper. For FedProx~\cite{fedprox}, we tune the hyper-parameter $\mu$ from $\{0.001, 0.01, 0.1, 1.0\}$. For FedAvgM~\cite{fedavgm}, we tune the hyper-parameter $\beta$ from $\{0.3, 0.5, 0.7, 0.9\}$. For FedDyn~\cite{feddyn}, we tune the hyper-parameter $\alpha$ from $\{0.001, 0.01, 0.1, 1.0\}$. For MOON~\cite{moon}, we tune the hyper-parameter $\mu$ from $\{0.01, 0.1, 0.5, 1.0, 5.0\}$. For FedDC~\cite{Gao_2022_CVPR}, we tune the hyper-parameter $\alpha$ from $\{0.001, 0.01, 0.1, 1.0\}$.

The tuned best hyper-parameter for these methods are: $\mu=0.01$ in FedProx~\cite{fedprox}, $\beta=0.5$ in FedAvgM~\cite{fedavgm}, $\alpha=0.01$ in FedDyn~\cite{feddyn}, $\mu=0.1$ in MOON~\cite{moon}, $\alpha=0.01$ in FedDC~\cite{Gao_2022_CVPR}.

\subsection{Globally Imbalanced Scenario: Accuracy and Fairness}
\label{append:imbalance}

We show that our proposed FedDisco is also capable of globally imbalanced category distribution scenario, that is the hypothetically aggregated global data is imbalanced. In the main paper, we show the performance of FedAvg~\cite{fedavg} and FedDyn~\cite{feddyn} with and without Disco for different globally imbalance level ($\rho={1,2,5,10,20}$) in Figure 5 (a).

We firstly visualize the hypothetically aggregated global data distribution in Figure~\ref{figure:imbalanced_distribution}, where x-axis denotes the category and y-axis denotes the number of data samples of the corresponding category. We plot the data distribution over categories for different globally imbalance levels ($\rho$). $\rho=1$ denotes balanced situation and $\rho=20$ denotes the most severe imbalanced situation. Our experiments in the Figure 5 (a) of the main paper show that our FedDisco works for all these globally imbalance levels.

\begin{figure}[htbp]  
	\centering
	\subfigure[Balanced: $\rho=1$]{
		\includegraphics[width=0.18\textwidth]{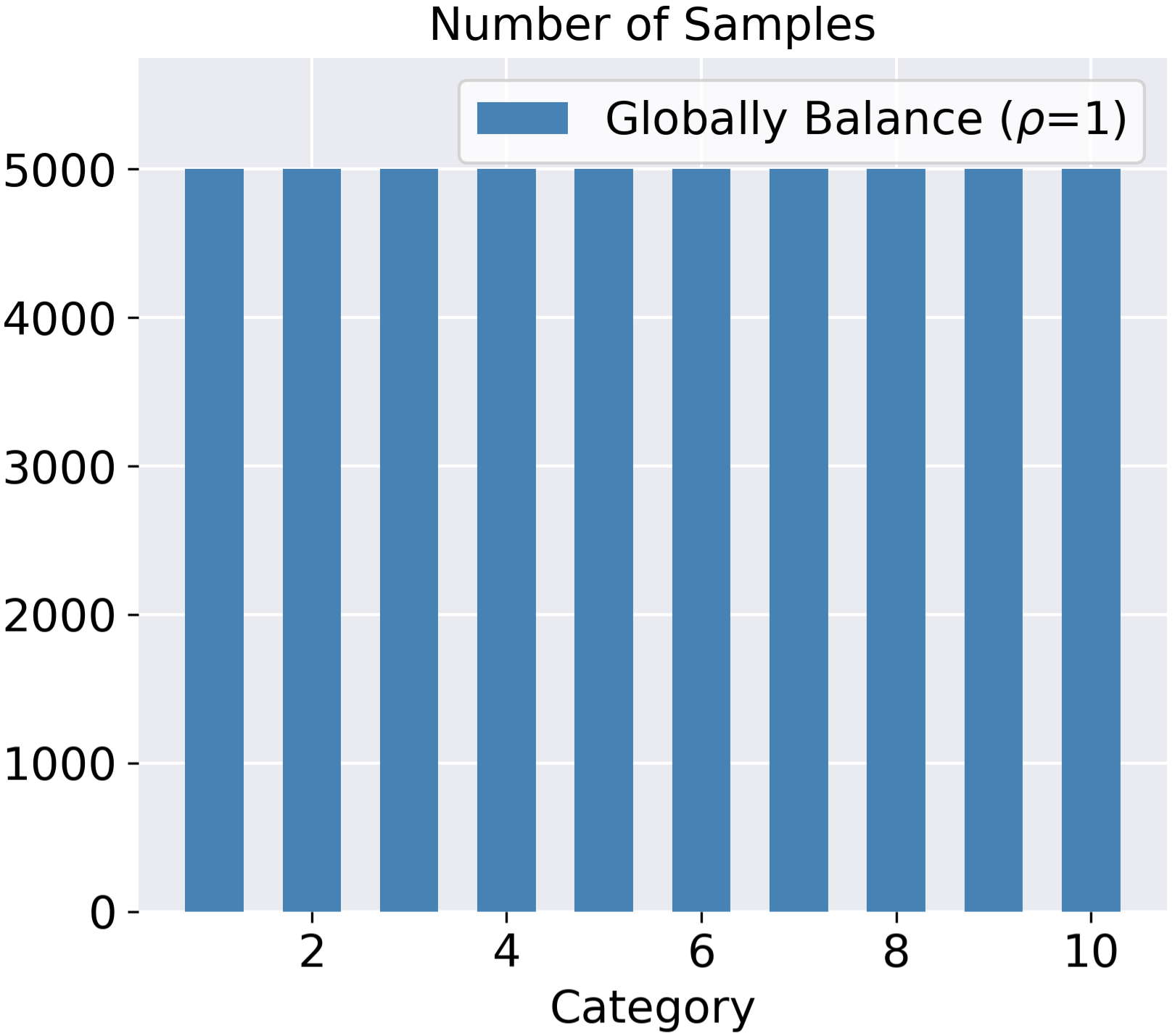}
	}
	\subfigure[Imbalanced: $\rho=2$]{
		\includegraphics[width=0.18\textwidth]{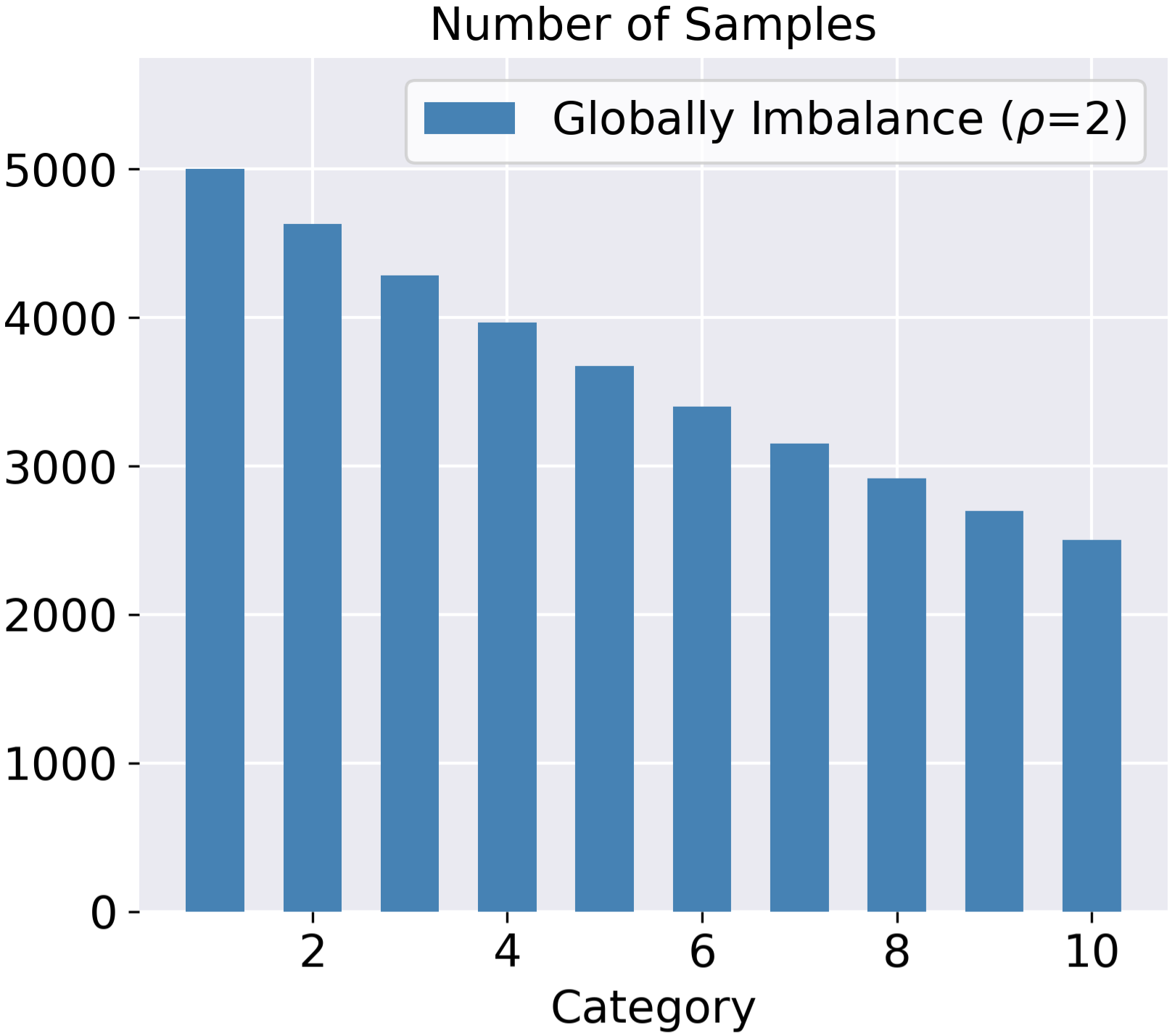}
	}
	\subfigure[Imbalanced: $\rho=5$]{
		\includegraphics[width=0.18\textwidth]{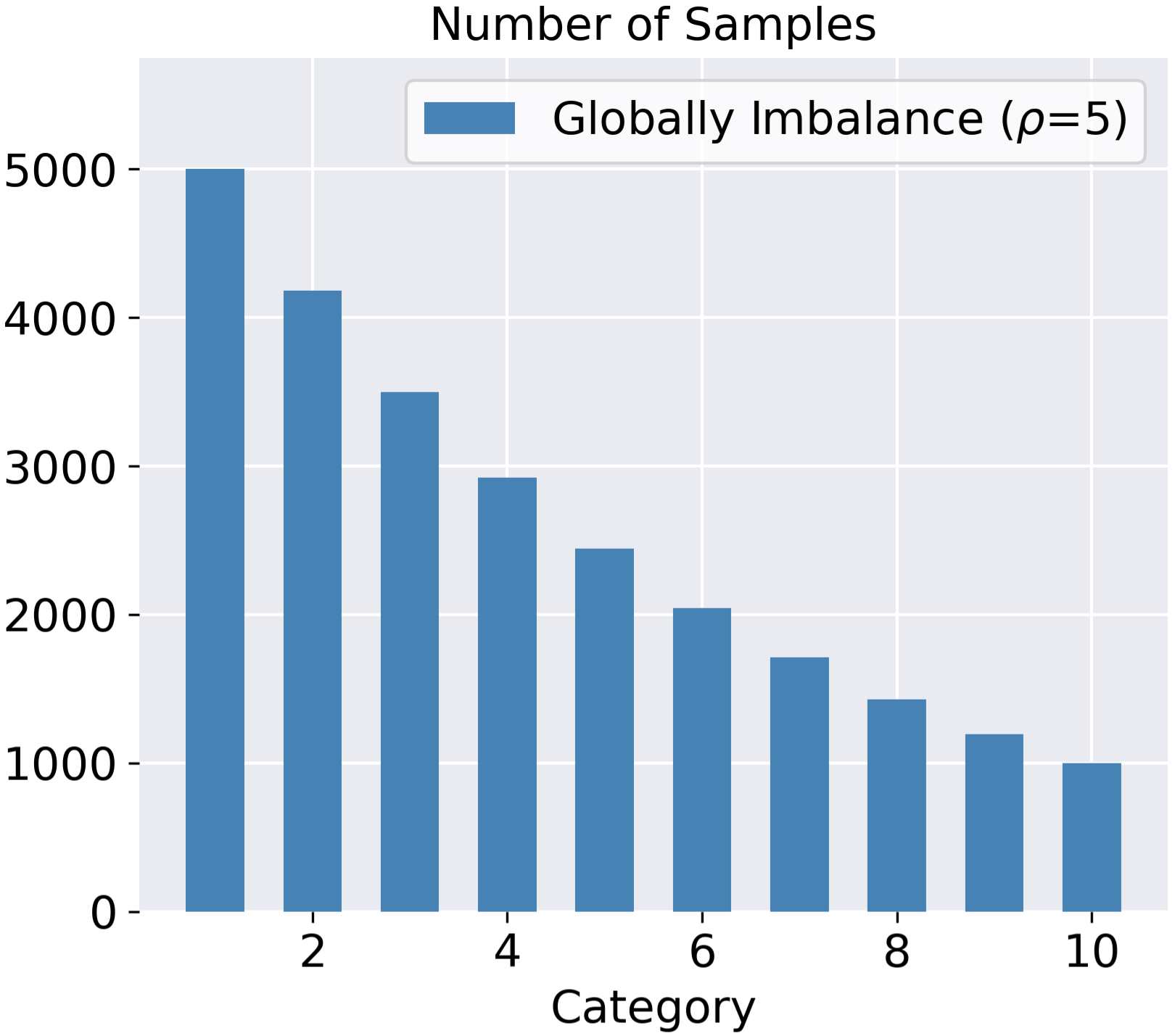}
	}
	\subfigure[Imbalanced: $\rho=10$]{
		\includegraphics[width=0.18\textwidth]{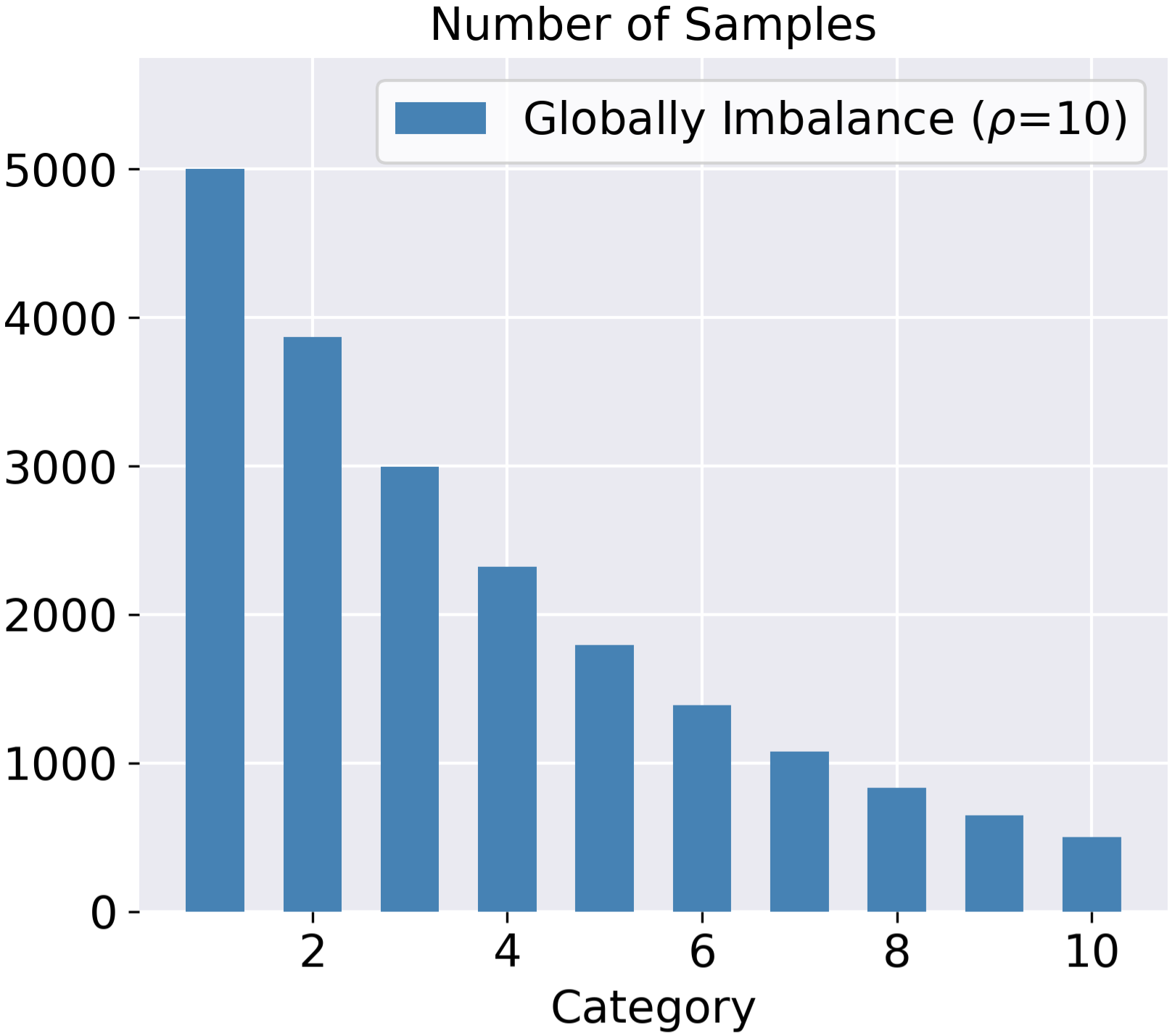}
	}
	\subfigure[Imbalanced: $\rho=20$]{
		\includegraphics[width=0.18\textwidth]{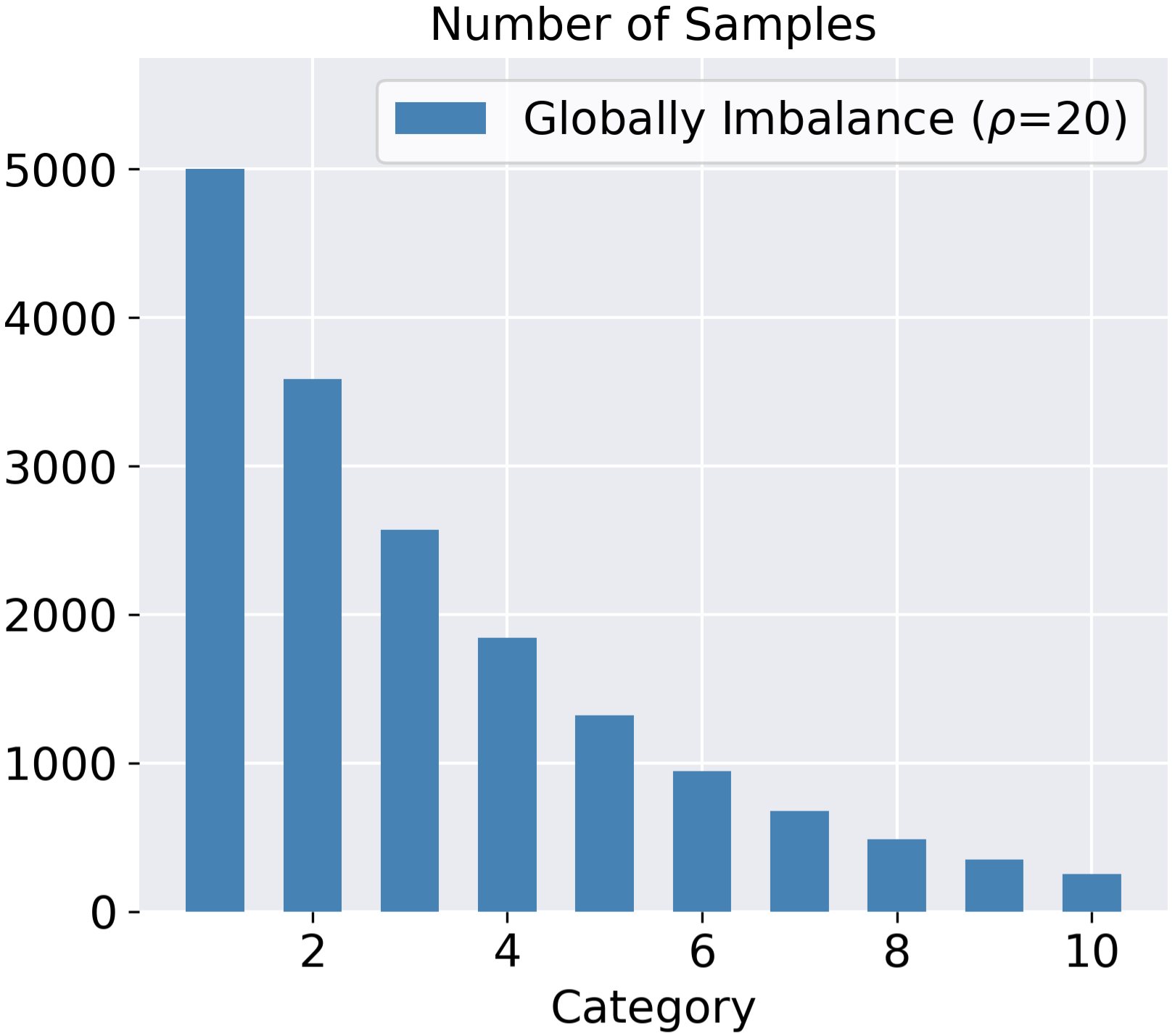}
	}
	\caption{Hypothetically aggregated global data distribution over categories for different globally imbalance level ($\rho$). The globally imbalance level increases from the left to the right, where $\rho=1$ denotes balanced situation and $\rho=20$ denotes the most severe imbalanced situation.}
	\label{figure:imbalanced_distribution}	
\end{figure}

Next, we provide the performance of more baselines when $\rho=10$ in Table  \ref{table:global_imbalanced}. Experiments show that our proposed FedDisco consistently improves over baselines under this globally imbalanced scenario, indicating its applicability to this scenario.

\begin{table}[t]
\caption{Classification accuracy ($\%$) comparisons under globally imbalanced dataset scenario ($\rho=10$). We highlight the \textbf{best} performance and \textbf{\emph{second-best}} performance. SCAFFOLD and FedDyn performs the best while SCAFFOLD requires twice communication costs. Experiments show our proposed FedDisco consistently improves the performances of baselines, indicating FedDisco's applicability to this scenario.}
\label{table:global_imbalanced}
\vskip 0.15in
\begin{center}
\begin{small}
\begin{sc}
\begin{tabular}{ccccccccc}
\toprule
Method & FedAvg & FedAvgM & FedProx & SCAFFOLD & FedDyn & FedNova & MOON & FedDC
\\ \midrule
Without Disco & 57.86 & 57.53 & 57.78 & 63.78 & 63.24 & 59.64 & 57.74 & 61.74 \\
With Disco & 60.45 & 60.05 & 60.33 & \textbf{65.13} & \textbf{\emph{64.33}} & 60.77 & 59.46 & 63.13\\ \bottomrule
\end{tabular}
\end{sc}
\end{small}
\end{center}
\vskip -0.2in
\end{table}

We also explore the performance of global model on each individual category in detail in Figure~\ref{figure:fairness}. Here, we use FedAvg~\cite{fedavg} as an example. Following~\cite{long_tail}, we define and evaluate on three subsets: Head (category 1 - 4), Middle (category 5 - 7) and Tail (category 8 - 10). Additionally, we report the averaged accuracy over all categories and standard deviation across categories. We see that i) FedDisco achieves comparable performance (difference within $0.75\%$) on Head and Middle classes and significantly higher performance (nearly $8\%$ improvement) on Tail classes. This suggests that FedAvg is severally biased to Head classes while FedDisco can mitigate this bias. ii) FedDisco achieves higher averaged accuracy with smaller standard deviation, which means FedDisco can simultaneous enhance overall performance and promote fairness across categories.

\begin{figure}[htbp]
\centering
\includegraphics[width=0.4\columnwidth]{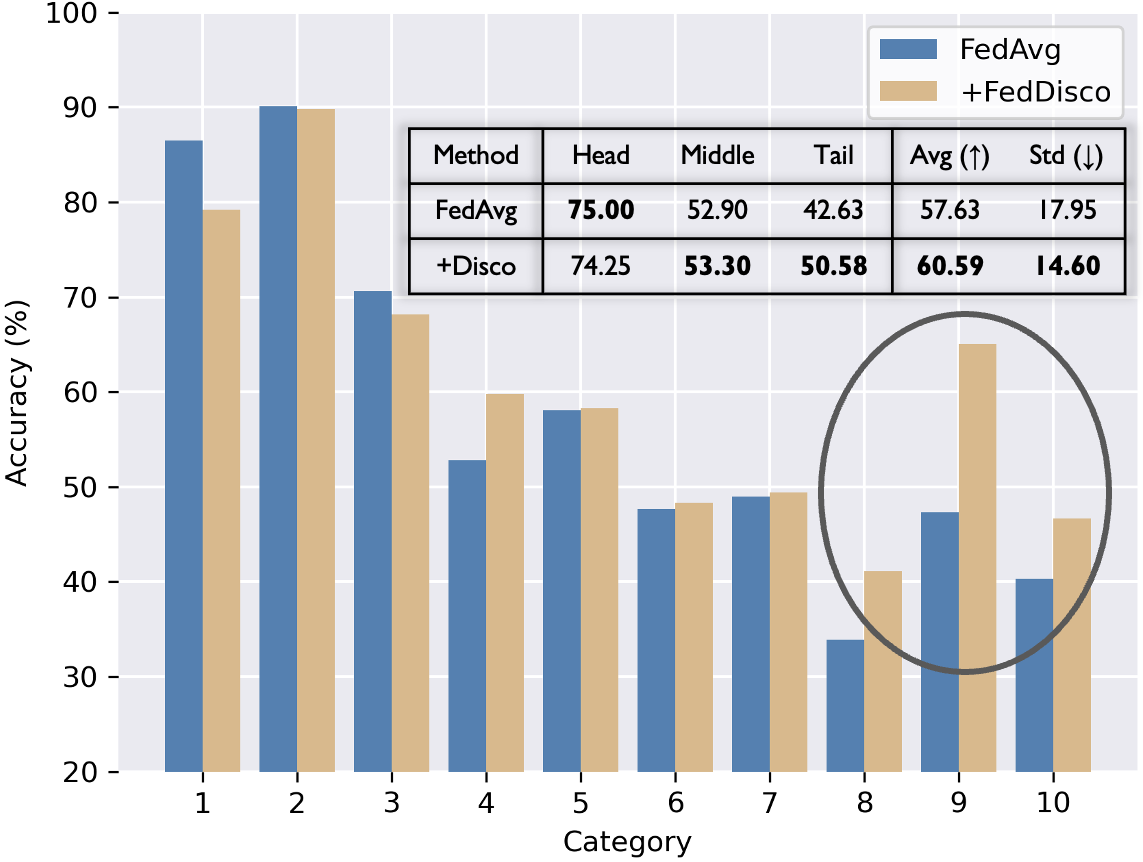}
\caption{Accuracy comparison of each category on CIFAR-10. FedDisco significantly enhance the performance of Tail classes without severely affecting the Head classes. FedDisco simultaneously enhance overall performance (Higher Avg) and fairness (Lower Std) over FedAvg.}
\label{figure:fairness}
\end{figure}

\subsection{Partial Client Participation Scenario: Accuracy, Stability and Training Speed}

In practice, clients may participate only when they are in reliable power and Wi-Fi condition, indicating that only a subset of clients participate in each round (partial client participation). We conduct experiments on CIFAR-10~\cite{cifar10} where there are $40$ biased clients and $10$ unbiased clients. In each round, we randomly sample $10$ clients to participate.

We show results from two aspects in Table \ref{table:partial_3}, including the mean and standard deviation (std) of accuracy of last $10$ rounds and rounds required to reach target accuracy ($55\%$). Experiments show that methods with our Disco achieve i) consistently higher mean and smaller std of accuracy, indicating better and more stable performance over rounds, ii) fewer rounds to achieve target accuracy, indicating faster training speed, which is friendly to save computation and communication cost. \emph{Specifically, for the basic method FedAvg, FedAvg with Disco achieves $7.32\%$ accuracy improvement and requires $38$ less rounds to achieve the target accuracy, which saves $65.52\%$ training time and communication cost.}

\begin{table}[t]
\caption{Performance under partial client participation scenario. Mean $\pm$ std is evaluated over last $10$ rounds. Rounds: rounds to reach target accuracy ($55\%$). Higher mean, lower std and smaller rounds represent better performance. We highlight the difference of mean accuracy brought by Disco in parentheses. We also highlight the reduced number of rounds together with the proportion of reduction brought by Disco in parentheses. Methods with Disco achieve higher accuracy, more stable performance and faster training. Specifically, FedAvg with Disco achieves $7.32\%$ accuracy improvement and requires $38$ less rounds to achieve the target accuracy, which saves $65.52\%$ training time and communication cost.}
\label{table:partial_3}
\vskip 0.15in
\begin{center}
\begin{small}
\begin{sc}

\begin{tabular}{ccccc}
\toprule
\multirow{2}{*}{Method} & \multicolumn{2}{c}{Mean $\pm$ std ($\%$)} & \multicolumn{2}{c}{Rounds} \\ 
  & Without Disco & With Disco & Without Disco & With Disco\\ \midrule
FedAvg   & 54.27 $\pm 3.44$ & 61.59 $\pm 1.84$ ($\boldsymbol{\uparrow 7.32}$) & 58 & 20 ($\boldsymbol{\downarrow 38, \downarrow 65.52\%}$) \\
FedAvgM  & 57.75 $\pm 1.92$ & 60.65 $\pm 1.32$ ($\boldsymbol{\uparrow 2.90}$) & 36 & 20 ($\boldsymbol{\downarrow 16, \downarrow 44.44\%}$) \\
FedProx  & 55.50 $\pm 3.02$ & 60.19 $\pm 2.47$ ($\boldsymbol{\uparrow 4.69}$) & 44 & 20 ($\boldsymbol{\downarrow 24, \downarrow 54.55\%}$) \\
SCAFFOLD & 60.43 $\pm 1.96$ & 64.10 $\pm 0.76$ ($\boldsymbol{\uparrow 3.67}$) & 34 & 20 ($\boldsymbol{\downarrow 14, \downarrow 41.18\%}$) \\
FedDyn   & 59.44 $\pm 2.59$ & 62.53 $\pm 2.34$ ($\boldsymbol{\uparrow 3.09}$) & 33 & 27 ($\boldsymbol{\downarrow 06, \downarrow 18.18\%}$) \\
FedNova  & 54.11 $\pm 3.83$ & 58.13 $\pm 2.72$ ($\boldsymbol{\uparrow 4.02}$) & 61 & 24 ($\boldsymbol{\downarrow 37, \downarrow 60.66\%}$) \\
MOON     & 54.30 $\pm 3.97$ & 58.79 $\pm 3.07$ ($\boldsymbol{\uparrow 4.49}$) & 56 & 24 ($\boldsymbol{\downarrow 32, \downarrow 57.14\%}$) \\ \bottomrule
\end{tabular}
\end{sc}
\end{small}
\end{center}
\vskip -0.2in
\end{table}

\subsection{Experimental results on CIFAR-100}
\label{append:cifar100}
We provide the results of several baselines on CIFAR-100~\cite{cifar10} under NIID-1 setting in Table \ref{table:cifar100}. Experiments show that our proposed FedDisco consistently brings gains on CIFAR-100 dataset, verifying its modularity performance.

\begin{table}[t]
\caption{Classification accuracy ($\%$) comparisons on CIFAR-100 dataset under NIID-1 scenario. FedDyn with Disco is highlighted for \textbf{best} performance. Experiments show our proposed FedDisco consistently improves the performances of baselines on CIFAR-100.}
\label{table:cifar100}
\vskip 0.15in
\begin{center}
\begin{small}
\begin{sc}
\begin{tabular}{ccccccccc}
\toprule
Method & FedAvg & FedAvgM & FedProx & SCAFFOLD & FedDyn & FedNova & MOON & FedDC
\\ \midrule
Without Disco & 57.28 & 56.27 & 58.99 & 61.39 & 62.29 & 57.59 & 58.01 & 62.20 \\
With Disco & 58.28 & 56.73 & 59.29 & 61.90 & \textbf{62.83} & 58.02 & 58.16 & 62.61\\ \bottomrule
\end{tabular}
\end{sc}
\end{small}
\end{center}
\vskip -0.2in
\end{table}

\subsection{Effects on Client-Level Fairness}
\label{append:fairness}

In this paper, we focus on the generalization ability and promote category-level fairness, where the corresponding global distribution is uniform. Client-level fairness is another interesting and important research topic in FL~\cite{lifair}. Here, we explore from this aspect for more comprehensive understanding.

Note that FedDisco does not hurt client-level fairness for two reasons. First, when a client's aggregation weight is high, this client has a uniform training distribution, which can naturally benefit all the clients more or less. If we do the opposite thing and assign a high aggregation weight to a client whose training category distribution is highly skewed, this only benefits this single client, hurts the other clients and violates the client-level fairness. Second, each client's test data distribution could be close to uniform distribution, even though its training data distribution is far from uniform. One important aim of federated learning is to enable clients with biased and limited training data to be aware of global distribution.

To validate that FedDisco does not hurt client-level fairness, we run $40$ rounds of FL on CIFAR-10 and record the variance of test accuracy across clients. This accuracy variance is often used for evaluation of fairness~\cite{lifair}. Small variance denotes that the test accuracies of clients are similar, indicating high fairness. Thus, a lower variance denotes higher fairness. \cref{table:client_fair} compares the accuracy variance of FedAvg and the accuracy variance after applying FedDisco to FedAvg. From the table, we see that the variance of FedDisco is comparable or lower than FedAvg, indicating that FedDisco can potentially enhance the client-level fairness.

\begin{table}[t]
\caption{Comparison of accuracy variance across clients at different round. Lower denotes more fair.}
\label{table:client_fair}
\vskip 0.15in
\begin{center}
\begin{small}
\begin{sc}
\begin{tabular}{cccccc}
\toprule
Round & 1 & 10 & 20 & 30 & 40 \\ \midrule
FedAvg & 178.05 & 226.38 & \textbf{73.65} & 94.49 & 89.97 \\
\textbf{FedDisco} & \textbf{167.40} & \textbf{110.63} & 74.70 & \textbf{84.72} & \textbf{73.45}\\ \bottomrule
\end{tabular}
\end{sc}
\end{small}
\end{center}
\vskip -0.2in
\end{table}

\subsection{Comparison with Equal Aggregation Weights}

Following the experiments in~\cref{table:main}, \cref{table:1/K} compares FedDisco with FedAvg with equal aggregation weights (i.e., $p_k=1/K$). From the table, we see that FedDisco performs significantly better across different settings.

\begin{table*}[t]
\caption{Comparison between FedDisco and FedAvg with equal aggregation weights $p_k=1/K$. FedDisco performs significantly better across different datasets and heterogeneity types.}
\label{table:1/K}
\vskip 0.15in
\begin{center}
\begin{small}
\begin{sc}
\begin{tabular}{cccccccc}
\toprule
\multirow{2}{*}{Method} & HAM10000 & \multicolumn{2}{c}{CIFAR-10} & \multicolumn{2}{c}{CINIC-10} & \multicolumn{2}{c}{Fashion-MNIST} \\
& NIID-1 & NIID-1  & NIID-2 & NIID-1 & NIID-2 & NIID-1& NIID-2 \\
\midrule
FedAvg ($p_k=1/K$)  & 44.76 & 68.11 & 65.60 & 54.27 & 50.35 & 89.35 & 86.46 \\
\textbf{FedAvg+Disco}  & \textbf{50.95} & \textbf{70.05} & \textbf{68.30} & \textbf{54.81} & \textbf{52.46} & \textbf{89.56} & \textbf{87.56} \\
\bottomrule
\end{tabular}
\end{sc}
\end{small}
\end{center}
\vskip -0.2in
\end{table*}

\subsection{Experiments on Device Heterogeneity}

For device heterogeneity where there are stragglers, we conduct experiments with the NIID-2 setting on CIFAR-10 following the setting in~\cite{fednova}, where we uniformly sample iteration numbers for each client in the range of $50$ to $500$ for each round. Results show that FedAvg achieves $60.21\%$ while \textbf{FedDisco achieves $63.20\%$, indicating that FedDisco can still bring performance improvement under setting of stragglers.}

Device heterogeneity is an orthogonal issue to distribution heterogeneity. As these two issues can be concurrent in practice, FedDisco can still play a role in enhancing overall performance. More importantly, device heterogeneity can even exacerbate the effect of distribution heterogeneity. For example, if Client A has a smaller discrepancy level and Client B has a larger discrepancy level. When Client A is a straggler that performs fewer iterations while Client B performs much more iterations, it is more critical to enhance the aggregation weight of Client A, otherwise the FL system will be dominated by Client B and the global model will be biased. Further, we can combine algorithms (e.g., FedNova~\cite{fednova}) that specifically focuses on device heterogeneity with ours to further enhance performance.

\subsection{Other Experiments}

1) The idea of discrepancy-aware collaboration can be extended to otehr tasks such as regression. Here, we conduct experiments on regression dataset Prediction of Facebook Comment~\cite{facebook} (predicting the number of comments given a post). We pre-define several categories for this regression task, where each category covers regression labels in some specific range. For example, we group those samples with regression labels ranging from 1 to 10 as category 1. With this strategy, we can obtain a distribution vector as we do for classification tasks. Note that such operation is only conducted for obtaining a distribution vector, we still use the regression labels for model training. It turns out that the test loss of FedAvg~\cite{fedavg} is 0.432 and FedDisco achieves 0.407, indicating our FedDisco can also handle continuous label distributions.

2) Here, we consider feature-level heterogeneity. We conduct experiments on FEMNIST from LEAF benchmark~\cite{leaf} following~\cite{fedprox}, where both feature-level and category-level heterogeneity exist. We see that FedAvg achieves $76.40\%$ accuracy while FedDisco achieves $78.36\%$ accuracy.

\section{Theoretical Analysis}

\subsection{Preliminaries}

The global objective function is $ F (\mathbf{w})=\sum_{k=1}^N p_k F_k(\mathbf{w})$, where $\sum_{k=1}^K p_k = 1$. For ease of writing, we use $g_k(\mathbf{w})$ to denote mini-batch gradient $g_k(\mathbf{w}|\xi)$ and $\nabla F_k(\mathbf{w})$ to denote full-batch gradient, where $\xi$ is a mini-batch sampled from dataset. We further define the following two notions:

\begin{equation}
\text{Averaged Mini-batch Gradient:} \quad \mathbf{d}_k=\frac{1}{\tau} \sum_{r=0}^{\tau-1} g_k(\mathbf{w}_k^{(t,r)}),
\end{equation}

\begin{equation}
\text{Averaged Full-batch Gradient:} \quad \mathbf{h}_k=\frac{1}{\tau} \sum_{r=0}^{\tau-1} \nabla F _k(\mathbf{w}_k^{(t,r)}).
\end{equation}

Then, the update of the global model between two rounds is as follows:
\begin{equation}
\mathbf{w}^{(t+1,0)}-\mathbf{w}^{(t,0)}=-\tau \eta \sum_{k=1}^K p_k \mathbf{d}_k.
\end{equation}

Here, we presents a key lemma and defer its proof to section~\ref{sct:proof_lemma}.

\begin{lemma}
\label{lemma_1}
Suppose $\{ A_t \}_{t=1}^T$ is a sequence of random matrices and follows $\mathbb{E}[A_t|A_{t-1},A_{t-2},...,A_1]=\mathbf{0}$, then
\begin{equation*}
     \mathbb{E} \left[ \left\| \sum_{t=1}^T A_t \right\|^2_F \right] =\sum_{t=1}^T \mathbb{E} \left[ \left\| A_t \right\|^2_F \right]
\end{equation*}
\end{lemma}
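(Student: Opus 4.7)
The plan is to expand the squared Frobenius norm of the sum into a double sum of Frobenius inner products and then eliminate the cross terms using the martingale-difference hypothesis. I will rely on the fact that the Frobenius norm arises from the inner product $\langle A, B\rangle_F = \mathrm{tr}(A^\top B)$, which is bilinear and symmetric in its matrix arguments.

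First, I would expand
\begin{equation*}
\left\|\sum_{t=1}^T A_t\right\|_F^2 = \sum_{s=1}^T \sum_{t=1}^T \langle A_s, A_t \rangle_F = \sum_{t=1}^T \|A_t\|_F^2 + 2 \sum_{1 \le s < t \le T} \langle A_s, A_t \rangle_F,
\end{equation*}
using symmetry of the Frobenius inner product. Taking expectations and using linearity, it then suffices to verify that $\mathbb{E}[\langle A_s, A_t\rangle_F] = 0$ for every pair with $s < t$. For this, I would apply the tower property while conditioning on $\mathcal{F}_{t-1} := \sigma(A_1, \ldots, A_{t-1})$: since $s < t$, the matrix $A_s$ is $\mathcal{F}_{t-1}$-measurable, so by bilinearity of the inner product
\begin{equation*}
\mathbb{E}\bigl[\langle A_s, A_t\rangle_F \mid \mathcal{F}_{t-1}\bigr] = \bigl\langle A_s,\, \mathbb{E}[A_t \mid \mathcal{F}_{t-1}]\bigr\rangle_F = \langle A_s, \mathbf{0}\rangle_F = 0,
\end{equation*}
where the penultimate equality is exactly the martingale-difference hypothesis. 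Taking outer expectation then gives $\mathbb{E}[\langle A_s, A_t\rangle_F] = 0$, which forces all off-diagonal terms to vanish and leaves precisely the diagonal sum asserted by the lemma.

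Because this is a classical martingale-differences identity with a clean few-line proof, there is no genuinely hard step; the main care is justifying the ``pull-out'' of $A_s$ from the conditional expectation at the matrix level, which follows either from the entrywise bilinearity of $\mathrm{tr}(A^\top B)$ or, equivalently, by expanding the trace entrywise and applying the scalar pull-out property term-by-term. An implicit integrability assumption $\mathbb{E}\|A_t\|_F^2 < \infty$ for each $t$ is needed so that all inner products are well-defined and the interchange of finite sums and expectations is legal; in the present FL application this is automatic since the $A_t$ arise as bounded-variance stochastic gradient quantities.
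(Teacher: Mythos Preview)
Your proposal is correct and takes essentially the same approach as the paper: expand the squared Frobenius norm into diagonal plus cross terms, then kill the cross terms via the tower property and the martingale-difference hypothesis. Your write-up is in fact somewhat more careful than the paper's (explicit about the $\sigma$-algebra, the pull-out of $A_s$, and the implicit integrability assumption), but the underlying argument is identical.
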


\subsection{Proof of~\cref{theorem_1}}
\label{append_proof_theorem}

According to the Lipschitz-smooth assumption in~\cref{ass_smooth}, we have its equivalent form~\cite{l_smooth}

\begin{align}
    & \quad \mathbb{E} \left[ F (\mathbf{w}^{(t+1,0)}) \right] - F (\mathbf{w}^{(t,0)})\notag \\
    &\leq \mathbb{E} \left[ \left \langle \nabla F (\mathbf{w}^{(t,0)}), \mathbf{w}^{(t+1,0)}-\mathbf{w}^{(t,0)} \right \rangle \right] - \frac{L}{2} \mathbb{E} \left[ \left\| \mathbf{w}^{(t+1,0)} - \mathbf{w}^{(t,0)}  \right\|^2 \right]\\
    &= -\tau \eta \underbrace{\mathbb{E} \left[ \left\langle \nabla F (\mathbf{w}^{(t,0)}), \sum_{k=1}^Kp_k\mathbf{d}_k \right\rangle \right]}_{N_1} + \frac{L\tau^2\eta^2}{2} \underbrace{\mathbb{E} \left[ \left\| \sum_{k=1}^K p_k \mathbf{d}_k \right\|^2 \right]}_{N_2}, \label{T1T2}
\end{align}
where the expectation is taken over mini-batches $\xi_k^{(t,r)}$, $\forall k \in {1,2,...,K}$, $r \in {0,1,...,\tau-1}$.

\subsubsection{Bounding $N_1$ in (\ref{T1T2})}

\begin{align}
    N_1 &= \mathbb{E} \left[ \left\langle \nabla F (\mathbf{w}^{(t,0)}), \sum_{k=1}^K p_k (\mathbf{d}_k-\mathbf{h}_k) \right\rangle \right] + \mathbb{E} \left[ \left\langle \nabla F (\mathbf{w}^{(t,0)}), \sum_{k=1}^K p_k \mathbf{h}_k \right\rangle \right] \\
    & = \mathbb{E} \left[ \left\langle \nabla F (\mathbf{w}^{(t,0)}), \sum_{k=1}^K p_k \mathbf{h}_k \right\rangle \right] \label{T1_2} \\
    & = \frac{1}{2} \left\| \nabla F(\mathbf{w}^{(t,0)}) \right\|^2 + 
        \frac{1}{2} \mathbb{E} \left[ \left\| \sum_{k=1}^K p_k \mathbf{h}_k \right\|^2 \right]
        - \frac{1}{2} \mathbb{E} \left[ \left\| \nabla F(\mathbf{w}^{(t,0)}) - \sum_{k=1}^K p_k \mathbf{h}_k \right\|^2 \right], \label{T1_last}
\end{align}
where (\ref{T1_2}) uses the unbiased gradient assumption in~\cref{ass_grad}, such that $\mathbb{E}[\mathbf{d}_k-\mathbf{h}_k]=\mathbf{h}_k-\mathbf{h}_k=\mathbf{0}$. (\ref{T1_last}) uses the fact that $2\left\langle a,b \right\rangle = \left\| a \right\|^2 + \left\| b \right\|^2 - \left\| a-b \right\|^2$.

\subsubsection{Bounding $N_2$ in (\ref{T1T2})}

\begin{align}
    N_2 &= \mathbb{E} \left[ \left\| \sum_{k=1}^K p_k (\mathbf{d}_k-\mathbf{h}_k) + \sum_{k=1}^K p_k \mathbf{h}_k \right\|^2 \right] \\
    & \leq 2 \mathbb{E} \left[ \left\| \sum_{k=1}^K p_k (\mathbf{d}_k-\mathbf{h}_k) \right\|^2 \right]
        + 2 \mathbb{E} \left[ \left\| \sum_{k=1}^K p_k \mathbf{h}_k \right\|^2 \right] \label{T2_2} \\
    & = 2 \sum_{k=1}^K p_k^2 \mathbb{E}  \left[ \left\| \mathbf{d}_k-\mathbf{h}_k \right\|^2 \right]
        + 2 \mathbb{E} \left[ \left\| \sum_{k=1}^K p_k \mathbf{h}_k \right\|^2 \right] \label{T2_3} \\
    & = \frac{2}{\tau^2} \sum_{k=1}^K p_k^2 \mathbb{E}  \left[ \left\| \sum_{r=0}^{\tau-1} (g_k(\mathbf{w}_k^{(t,r)})-\nabla F_k(\mathbf{w}_k^{(t,r)})) \right\|^2 \right]
        + 2 \mathbb{E} \left[ \left\| \sum_{k=1}^K p_k \mathbf{h}_k \right\|^2 \right] \\
    & = \frac{2}{\tau^2} \sum_{k=1}^K p_k^2 \sum_{r=0}^{\tau-1} \mathbb{E}  \left[ \left\| g_k(\mathbf{w}_k^{(t,r)})-\nabla F_k(\mathbf{w}_k^{(t,r)}) \right\|^2 \right]
        + 2 \mathbb{E} \left[ \left\| \sum_{k=1}^K p_k \mathbf{h}_k \right\|^2 \right] \label{T2_5} \\
    & \leq \frac{2\sigma^2}{\tau} \sum_{k=1}^K p_k^2 
        + 2 \mathbb{E} \left[ \left\| \sum_{k=1}^K p_k \mathbf{h}_k \right\|^2 \right] \label{T2_last}
\end{align}
where (\ref{T2_2}) follows $\left\| a+b \right\|^2 \leq 2\left\| a \right\|^2 + 2\left\| b \right\|^2$, (\ref{T2_3}) uses the fact that clients are independent to each other so that $\mathbb{E} \left\langle \mathbf{d}_k-\mathbf{h}_k, \mathbf{d}_n-\mathbf{h}_n \right\rangle=0, \forall k \neq n$. (\ref{T2_5}) uses~\cref{lemma_1} and (\ref{T2_last}) uses bounded variance assumption in~\cref{ass_grad}.

Plug (\ref{T1_last}) and (\ref{T2_last}) back into (\ref{T1T2}), we have
\begin{align}
    & \mathbb{E} \left[ F (\mathbf{w}^{(t+1,0)}) \right] - F (\mathbf{w}^{(t,0)})\notag \\
    \leq & - \frac{\tau \eta}{2} \left\| \nabla F(\mathbf{w}^{(t,0)}) \right\|^2 
        - \frac{\tau \eta}{2} (1-2\tau \eta L) \mathbb{E} \left[ \left\| \sum_{k=1}^K p_k \mathbf{h}_k \right\|^2 \right] +L \tau \eta^2 \sigma^2 \sum_{k=1}^K p_k^2
        + \frac{\tau \eta}{2} \underbrace{\mathbb{E} \left[ \left\| \nabla F(\mathbf{w}^{(t,0)}) - \sum_{k=1}^K p_k \mathbf{h}_k \right\|^2 \right]}_{N_3} \label{N3_first}.
\end{align}

\subsubsection{Bounding $N_3$ in (\ref{N3_first})}

\begin{align}
    & \mathbb{E} \left[ \left\| \nabla F(\mathbf{w}^{(t,0)}) - \sum_{k=1}^K p_k \mathbf{h}_k \right\|^2 \right] \notag \\
    = & \mathbb{E} \left[ \left\| \sum_{k=1}^K (n_k-p_k)\nabla F_k(\mathbf{w}^{(t,0)}) + \sum_{k=1}^K p_k\left(\nabla F_k(\mathbf{w}^{(t,0)}) - \mathbf{h}_k \right)  \right\|^2 \right] \\
    \leq & 2 \left\| \sum_{k=1}^K (n_k-p_k)\nabla F_k(\mathbf{w}^{(t,0)}) \right\|^2 + 2 \left\| \sum_{k=1}^K p_k\left(\nabla F_k(\mathbf{w}^{(t,0)}) - \mathbf{h}_k\right)  \right\|^2 \label{N3_1}\\
    \leq & 2  \left[ \sum_{k=1}^K (n_k-p_k)^2 \right] \left[ \sum_{k=1}^K \left\|\nabla F_k(\mathbf{w}^{(t,0)}) \right\|^2 \right] + 2 \left\| \sum_{k=1}^K p_k\left(\nabla F_k(\mathbf{w}^{(t,0)}) - \mathbf{h}_k\right)  \right\|^2 \label{N3_2}\\
    \leq & 2 \left[ \sum_{k=1}^K (n_k-p_k)^2 \right] \left[ K \left\| \nabla F(\mathbf{w}^{(t,0)}) \right\|^2 + B \sum_{k=1}^K d_k \right] + 2 \left\| \sum_{k=1}^K p_k\left(\nabla F_k(\mathbf{w}^{(t,0)}) - \mathbf{h}_k\right)  \right\|^2 \label{N3_3},
\end{align}
where (\ref{N3_1}) follows $\left\| a+b \right\|^2 \leq 2\left\| a \right\|^2 + 2\left\| b \right\|^2$, (\ref{N3_2}) follows Cauchy–Schwarz inequality, (\ref{N3_3}) uses the bounded similarity assumption in~\cref{ass_diss}.

We use $W_D$ to denote $2K \left[ \sum_{k=1}^K (n_k-p_k)^2 \right]$. When $1-2\tau \eta L \geq 0$, we have
\begin{align}
    & \quad \mathbb{E} \left[ F (\mathbf{w}^{(t+1,0)}) \right] - F (\mathbf{w}^{(t,0)})\notag \\
    & \leq - \frac{\tau \eta (1-W_D)}{2} \left\| \nabla F(\mathbf{w}^{(t,0)}) \right\|^2 
        +L \tau \eta^2 \sigma^2 \sum_{k=1}^K p_k^2 + \frac{\tau \eta W_D B}{2K} \sum_{k=1}^K d_k
        + \tau \eta \mathbb{E} \left[ \left\| \sum_{k=1}^K p_k \left(\nabla F_k(\mathbf{w}^{(t,0)}) -  \mathbf{h}_k\right) \right\|^2 \right] \\
    & \leq - \frac{\tau \eta (1-W_D)}{2} \left\| \nabla F(\mathbf{w}^{(t,0)}) \right\|^2 
        +L \tau \eta^2 \sigma^2 \sum_{k=1}^K p_k^2 + \frac{\tau \eta W_D B}{2K} \sum_{k=1}^K d_k
        + \tau \eta \sum_{k=1}^K p_k \underbrace{\mathbb{E} \left[ \left\| \nabla F_k(\mathbf{w}^{(t,0)}) -  \mathbf{h}_k \right\|^2 \right]}_{N_4} \label{N4},
\end{align}
where (\ref{N4}) uses Jensen's Inequality $\left\| \sum_{k=1}^K p_k x_k \right\|^2 \leq \sum_{k=1}^K p_k \left\| x_k \right\|^2$.

\subsubsection{Bounding $N_4$ in (\ref{N4})}
\begin{align}
    \mathbb{E} \left[ \left\| \nabla F_k(\mathbf{w}^{(t,0)}) - \mathbf{h}_k \right\|^2 \right]
        & = \mathbb{E} \left[ \left\| \nabla F_k(\mathbf{w}^{(t,0)}) - \frac{1}{\tau} \sum_{r=0}^{\tau-1} 
            \nabla F_k(\mathbf{w}_k^{(t,r)}) \right\|^2 \right] \\
        & = \mathbb{E} \left[ \left\| \frac{1}{\tau} \sum_{r=0}^{\tau-1} (\nabla F_k(\mathbf{w}^{(t,0)}) - 
            \nabla F_k(\mathbf{w}_k^{(t,r)})) \right\|^2 \right] \\
        & \leq \frac{1}{\tau} \sum_{r=0}^{\tau-1} \mathbb{E} \left[ \left\| 
            \nabla F_k(\mathbf{w}^{(t,0)}) - \nabla F_k(\mathbf{w}_k^{(t,r)}) \right\|^2 \right] \label{N4_3} \\
        & \leq \frac{L^2}{\tau} \sum_{r=0}^{\tau-1} \underbrace{\mathbb{E} \left[ \left\| 
            \mathbf{w}^{(t,0)} - \mathbf{w}_k^{(t,r)} \right\|^2 \right]}_{N_5} \label{N4_last},
\end{align}
where (\ref{N4_3}) uses Jensen's Inequality and (\ref{N4_last}) follows Lipschitz-smooth property.

\subsubsection{Bounding $N_5$ in (\ref{N5_last})}
\begin{align}
    &\mathbb{E} \left[ \left\| \mathbf{w}^{(t,0)} - \mathbf{w}_k^{(t,r)} \right\|^2 \right]
     = \eta^2 \mathbb{E} \left[ \left\| \sum_{s=0}^{r-1} g_k(\mathbf{w}_k^{(t,s)}) \right\|^2 \right] \\
     \leq & 2\eta^2 \mathbb{E} \left[ \left\| \sum_{s=0}^{r-1} \left( g_k(\mathbf{w}_k^{(t,s)})-\nabla F_k(\mathbf{w}_k^{(t,s)}) \right) \right\|^2 \right]
        + 2\eta^2 \mathbb{E} \left[ \left\| \sum_{s=0}^{r-1}  \nabla F_k(\mathbf{w}_k^{(t,s)}) \right\|^2 \right] \label{N5_2}\\
    = & 2\eta^2 \sum_{s=0}^{r-1} \mathbb{E} \left[ \left\|  g_k(\mathbf{w}_k^{(t,s)})-\nabla F_k(\mathbf{w}_k^{(t,s)}) \right\|^2 \right]
        + 2\eta^2 \mathbb{E} \left[ \left\| \sum_{s=0}^{r-1}  \nabla F_k(\mathbf{w}_k^{(t,s)}) \right\|^2 \right] \label{N5_3}\\
    \leq & 2r\eta^2\sigma^2 + 2\eta^2 \mathbb{E} \left[ \left\| r \sum_{s=0}^{r-1} \frac{1}{r}  \nabla F_k(\mathbf{w}_k^{(t,s)}) \right\|^2 \right] \label{N5_4}\\
    \leq & 2r\eta^2\sigma^2 + 2r\eta^2 \sum_{s=0}^{r-1} \mathbb{E} \left[ \left\| \nabla F_k(\mathbf{w}_k^{(t,s)}) \right\|^2 \right] \label{N5_5}\\
    \leq & 2r\eta^2\sigma^2 + 2r\eta^2 \sum_{s=0}^{\tau-1} \mathbb{E} \left[ \left\| \nabla F_k(\mathbf{w}_k^{(t,s)}) \right\|^2 \right] \label{N5_last}
\end{align}
where (\ref{N5_2}) uses $\left\| a+b \right\|^2 \leq  2 \left\| a \right\|^2 + 2 \left\| b \right\|^2$, (\ref{N5_3}) uses~\cref{lemma_1}, (\ref{N5_4}) uses the bounded variance assumption in~\cref{ass_grad}, (\ref{N5_5}) uses Jensen's Inequality.

Plug (\ref{N5_last}) back into (\ref{N4_last}) and use this equation $\sum_{r=0}^{\tau-1}r=\frac{\tau(\tau-1)}{2}$, we have
\begin{align}
    &\mathbb{E} \left[ \left\| \nabla F_k(\mathbf{w}^{(t,0)}) - \mathbf{h}_k \right\|^2 \right] \leq \frac{L^2}{\tau} \sum_{r=0}^{\tau-1} \mathbb{E} \left[ \left\| \mathbf{w}^{(t,0)} - \mathbf{w}_k^{(t,r)} \right\|^2 \right] \\
    \leq & (\tau-1)L^2\eta^2\sigma^2+(\tau-1)L^2\eta^2\sum_{s=0}^{\tau-1} \underbrace{\mathbb{E} \left[ \left\| \nabla F_k(\mathbf{w}_k^{(t,s)}) \right\|^2 \right]}_{N_6} \label{N6},
\end{align}
where $N_6$ in (\ref{N6}) can be further bounded.

\subsubsection{Bounding $N_6$ in (\ref{N6})}
\begin{align}
    & \mathbb{E} \left[ \left\| \nabla F_k(\mathbf{w}_k^{(t,s)}) \right\|^2 \right] \notag \\
    \leq & 2 \mathbb{E} \left[ \left\| \nabla F_k(\mathbf{w}_k^{(t,s)})-\nabla F_k(\mathbf{w}^{(t,0)}) \right\|^2 \right]
        + 2 \mathbb{E} \left[ \left\| \nabla F_k(\mathbf{w}^{(t,0)}) \right\|^2 \right] \label{N6_2}\\
    \leq & 2 L^2 \mathbb{E} \left[ \left\| \mathbf{w}^{(t,0)}-\mathbf{w}_k^{(t,s)} \right\|^2 \right]
        + 2 \mathbb{E} \left[ \left\| \nabla F_k(\mathbf{w}^{(t,0)}) \right\|^2 \right] \label{N6_last},
\end{align}
where (\ref{N6_2}) uses $\left\| a+b \right\|^2 \leq  2 \left\| a \right\|^2 + 2 \left\| b \right\|^2$, (\ref{N6_last}) uses Lipschitz-smooth property. Plug (\ref{N6_last}) back to (\ref{N6}), we have
\begin{align}
    & \frac{L^2}{\tau} \sum_{r=0}^{\tau-1} \mathbb{E} \left[ \left\| \mathbf{w}^{(t,0)} - \mathbf{w}_k^{(t,r)} \right\|^2 \right]\notag \\
    \leq & (\tau-1)L^2\eta^2\sigma^2+2(\tau-1)\eta^2L^4\sum_{s=0}^{\tau-1} \mathbb{E} \left[ \left\| \mathbf{w}_k^{(t,0)}-\mathbf{w}^{(t,s)} \right\|^2 \right] \notag \\
        & \quad + 2(\tau-1)\eta^2L^2 \sum_{s=0}^{\tau-1} \mathbb{E} \left[ \left\| \nabla F_k(\mathbf{w}^{(t,0)}) \right\|^2 \right] 
\end{align}

After rearranging, we have

\begin{align}
    &\mathbb{E} \left[ \left\| \nabla F_k(\mathbf{w}^{(t,0)}) - \mathbf{h}_k \right\|^2 \right] \notag \\
    \leq & \frac{L^2}{\tau} \sum_{r=0}^{\tau-1} \mathbb{E} \left[ \left\| \mathbf{w}^{(t,0)} - \mathbf{w}_k^{(t,r)} \right\|^2 \right] \\
    \leq & \frac{(\tau-1)\eta^2\sigma^2L^2}{1-2\tau(\tau-1)\eta^2L^2} + \frac{2\tau(\tau-1)\eta^2L^2}{1-2\tau(\tau-1)\eta^2L^2} \mathbb{E} \left[ \left\| \nabla F_k(\mathbf{w}^{(t,0)}) \right\|^2 \right] \\
    = & \frac{(\tau-1)\eta^2\sigma^2L^2}{1-A} + \frac{A}{1-A} \mathbb{E} \left[ \left\| \nabla F_k(\mathbf{w}^{(t,0)}) \right\|^2 \right] \label{T3_rearange},
\end{align}
where we define $A=2\tau(\tau-1)\eta^2L^2 < 1$. Then, the last term in (\ref{N4}) is bounded by

\begin{align}
    &\tau\eta \sum_{k=1}^K p_k \mathbb{E} \left[ \left\| \nabla F_k(\mathbf{w}^{(t,0)}) - \mathbf{h}_k \right\|^2 \right] \notag \\
    \leq & \tau\eta \sum_{k=1}^K \left\{ p_k \left[ \frac{(\tau-1)\eta^2\sigma^2L^2}{1-A} + \frac{A}{1-A} \mathbb{E} \left[ \left\| \nabla F_k(\mathbf{w}^{(t,0)}) \right\|^2 \right] \right] \right\} \\
    \leq & \frac{\tau(\tau-1)\sigma^2L^2\eta^3}{1-A} + \frac{\tau \eta A}{1-A} \mathbb{E} \left[ \left\| \nabla F (\mathbf{w}^{(t,0)}) \right\|^2 \right] + \frac{\tau \eta A B}{1-A} \sum_{k=1}^K p_k d_k \label{plug_to_T3},
\end{align}
where (\ref{plug_to_T3}) follows bounded dissimilarity assumption in~\cref{ass_diss}. Plug (\ref{plug_to_T3}) back to (\ref{N4}), we have
\begin{align}
    &\mathbb{E} \left[ F (\mathbf{w}^{(t+1,0)}) \right] - F (\mathbf{w}^{(t,0)})\notag \\
    \leq & - \frac{\tau \eta (1-W_D)}{2} \left\| \nabla F(\mathbf{w}^{(t,0)}) \right\|^2 +L \tau \eta^2 \sigma^2 \sum_{k=1}^K p_k^2 + \frac{\tau \eta W_D B}{2K} \sum_{k=1}^K d_k \notag\\
    & +\frac{\tau(\tau-1)\sigma^2L^2\eta^3}{1-A} + \frac{\tau \eta A}{1-A} \mathbb{E} \left[ \left\| \nabla F (\mathbf{w}^{(t,0)}) \right\|^2 \right] + \frac{\tau \eta A B}{1-A} \sum_{k=1}^K p_k d_k \\
    = & - \frac{\tau \eta}{2} (1-W_D-\frac{2A}{1-A}) \left\| \nabla F(\mathbf{w}^{(t,0)}) \right\|^2 + L \tau \eta^2 \sigma^2 \sum_{k=1}^K p_k^2 + \frac{\tau \eta W_D B}{2K} \sum_{k=1}^K d_k + \frac{\tau(\tau-1)\sigma^2L^2\eta^3}{1-A} + \frac{\tau \eta A B}{1-A} \sum_{k=1}^K p_k d_k
\end{align}

Finally, by taking the average expectation across all rounds, we finish the proof of~\cref{theorem_1}
\begin{align}
    & \mathop{\min}_{t} \mathbb{E} \left\| \nabla F(\mathbf{w}^{(t,0)}) \right\|^2 \leq \frac{1}{T} \sum_{t=0}^{T-1} \mathbb{E} \left\| \nabla F(\mathbf{w}^{(t,0)}) \right\|^2 \\
    \leq & \frac{2(1-A)\left( F(\mathbf{w}^{(0,0)}) - F_{inf} \right)}{\tau \eta T \left[ 1-3A-W_D(1-A) \right]} + \frac{(1-A)W_DB\sum_{k=1}^Kd_k}{\left[ 1-3A-W_D(1-A) \right]K} \notag\\
    & + \frac{2(1-A)L\eta \sigma^2 \sum_{k=1}^Kp_k^2}{\left[ 1-3A-W_D(1-A) \right]} + \frac{2(\tau-1)\sigma^2L^2\eta^2}{\left[ 1-3A-W_D(1-A) \right]} + \frac{2AB\sum_{k=1}^Kp_kd_k}{\left[ 1-3A-W_D(1-A) \right]} \\
    = & \frac{1}{1-3A-W_D(1-A)} \bigg( \underbrace{\frac{2(1-A)\left( F(\mathbf{w}^{(0,0)}) - F_{inf} \right)}{\tau \eta T}}_{T_1} +  \underbrace{\frac{(1-A)W_DB}{K} \sum_{k=1}^Kd_k}_{T_2} \notag\\
    &+ \underbrace{2(1-A)L\eta \sigma^2 \sum_{k=1}^Kp_k^2}_{T_3} + \underbrace{2(\tau-1)\sigma^2L^2\eta^2}_{T_4} + \underbrace{2AB\sum_{k=1}^Kp_kd_k}_{T_5} \bigg) \label{theorem_last},
\end{align}
where $W_D=2K \left[ \sum_{k=1}^K (n_k-p_k)^2 \right]$, $p_k$ is the aggregation weight, $n_k$ is the dataset relative size, $d_k$ is the discrepancy level, $A=2\tau(\tau-1)\eta^2L^2 < 1$, $\tau$ is the number of steps in local model training, $\eta$ is learning rate, $T$ is the total communication round in FL, $K$ is the total client number, $F_{inf}, B, L, \sigma$ are the constants in assumptions.

\subsection{Proof of~\cref{lemma_1}}
\label{sct:proof_lemma}

\emph{Suppose $\{ A_t \}_{t=1}^T$ is a sequence of random matrices and follows $\mathbb{E}[A_t|A_{t-1},A_{t-2},...,A_1]=\mathbf{0}$, then}
\begin{equation*}
    \mathbb{E} \left[ \left\| \sum_{t=1}^T A_t \right\|^2_F \right] =\sum_{t=1}^T \mathbb{E} \left[ \left\| A_t \right\|^2_F \right]
\end{equation*}
\emph{Proof.}
\begin{align}
    \mathbb{E} \left[ \left\| \sum_{t=1}^T A_t \right\|^2_F \right] 
    &=\sum_{t=1}^T \mathbb{E} \left[ \left\| A_t \right\|^2_F \right] + \sum_{i=1}^T\sum_{j=1,j\neq i}^T \mathbb{E} \left[ Tr \left\{ A_i^\top A_j^\top \right\} \right]\\
    &=\sum_{t=1}^T \mathbb{E} \left[ \left\| A_t \right\|^2_F \right] + \sum_{i=1}^T\sum_{j=1,j\neq i}^T Tr \left\{ \mathbb{E} \left[ A_i^\top A_j^\top  \right] \right\}\\
    &=\sum_{t=1}^T \mathbb{E} \left[ \left\| A_t \right\|^2_F \right] \label{lemma1_last},
\end{align}
where (\ref{lemma1_last}) comes from assuming $i<j$ and using the law of total expectation $\mathbb{E} \left[ A_i^\top A_j \right]=\mathbb{E} \left[ A_i^\top \mathbb{E}[A_j|A_{i},...,A_1] \right]=\mathbf{0}$.

\subsection{Further Analysis with Proper Learning Rate}

As a conventional setting in theoretical literature, we can set the learning rate $\eta=\frac{1}{\sqrt{\tau T}}$~\cite{fednova}. Here, note that the learning rate $\eta$ is strongly correlated with the number of communication round $T$. Then, there are two typical cases:

\textbf{$T$ is finite and relatively small.} According to $\eta=\frac{1}{\sqrt{\tau T}}$, the learning rate $\eta$ will be relatively large. Then, $A=2\tau(\tau-1)\eta^2L^2$ will be relatively large and $(1-A)$ will be relatively small. As a result, $T_2$ will be relatively small and $T_5$ will be relatively large, making $T_5$ a dominant term in the optimization bound. Thus, in this case, tuning $p_k$ to make $T_5$ smaller is a better solution such that the $a$ and $b$ in~\cref{eq:disco_weight} will be non-zero and the overall upper bound could be tighter.

\textbf{$T$ is quite large or infinite.} the learning rate $\eta=\frac{1}{\sqrt{\tau T}}$ will be relatively small. Then, $A=2\tau(\tau-1)\eta^2L^2$ will be relatively small and $(1-A)$ will be relatively large. As a result, $T_2$ will be relatively large and $T_5$ will be relatively small, making $T_2$ a dominant term in the optimization bound. Thus, in this case, tuning $p_k$ to make $T_2$ smaller is a better solution. To achieve this, $W_D$ should be reduced to zero and thus $a$ and $b$ in~\cref{eq:disco_weight} will zero to make the upper bound tight. Here, we can have the following corollary:
\begin{corollary}
\label{corollary_1}
By substituting $\eta=\frac{1}{\sqrt{\tau T}}$ into~\cref{theorem_1}, we will have the following bound:
\begin{equation}
    \mathop{\min}_{t} \mathbb{E} ||\nabla F (\mathbf{w}^{(t,0)})||^2 \leq \mathcal{O} (\frac{1}{\sqrt{\tau T}}) + \mathcal{O} (\frac{1}{T}) + \mathcal{O} (\frac{\tau}{T}).
\end{equation}
\end{corollary}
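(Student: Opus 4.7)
}
The plan is a direct substitution argument: plug $\eta=\frac{1}{\sqrt{\tau T}}$ into every $\eta$-dependent quantity appearing in the bound of \cref{theorem_1} and track the resulting asymptotic order of each of the five numerator terms and of the prefactor $T_0$. First I would compute $\eta^2=\frac{1}{\tau T}$ and $A=2\tau(\tau-1)\eta^2L^2=\frac{2(\tau-1)L^2}{T}=\mathcal{O}\!\left(\frac{\tau}{T}\right)$, which becomes small as $T$ grows. Provided $T$ is large enough that $3A+W_D(1-A)<1$ (a mild requirement, since $A\to 0$ and we will assume $W_D$ bounded), the denominator satisfies $1-3A-W_D(1-A)=\Theta(1)$, so the overall rate is governed by the sum $T_1+T_2+T_3+T_4+T_5$ up to a constant factor.

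Next I would evaluate each numerator term one at a time. For $T_1=\frac{2(1-A)[F(\mathbf{w}^{(0,0)})-F_{\inf}]}{\tau\eta T}$, substituting $\eta=\frac{1}{\sqrt{\tau T}}$ gives $T_1=\mathcal{O}\!\left(\frac{1}{\sqrt{\tau T}}\right)$. For $T_3=2(1-A)L\eta\sigma^2\sum_k p_k^2$, using $\sum_k p_k^2\le 1$ and $1-A=\Theta(1)$ also yields $T_3=\mathcal{O}(\eta)=\mathcal{O}\!\left(\frac{1}{\sqrt{\tau T}}\right)$. The term $T_4=2(\tau-1)\sigma^2L^2\eta^2=\mathcal{O}\!\left(\frac{1}{T}\right)$ after cancelling the $\tau$ in $\eta^2$. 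Finally $T_5=2AB\sum_k p_k d_k=\mathcal{O}(A)=\mathcal{O}\!\left(\frac{\tau}{T}\right)$, since $\sum_k p_k d_k$ is a bounded convex combination of discrepancies. Summing these orders and dividing by $\Theta(1)$ gives exactly the claimed $\mathcal{O}\!\left(\frac{1}{\sqrt{\tau T}}\right)+\mathcal{O}\!\left(\frac{1}{T}\right)+\mathcal{O}\!\left(\frac{\tau}{T}\right)$.

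The one term that needs care is $T_2=\frac{(1-A)W_DB}{K}\sum_k d_k$, because $W_D=2\sum_k(n_k-p_k)^2$ does not depend on $\eta$ and is generically $\Theta(1)$, which would produce a constant term and spoil the rate. The intended regime (matching the discussion immediately preceding the corollary) is the one in which $W_D$ is driven to zero by choosing $p_k$ close to $n_k$; in particular, setting $p_k=n_k$ makes $T_2\equiv 0$ and also keeps the denominator away from zero uniformly in $T$. I would therefore state this choice as the operative convention for the corollary and note that, more generally, any assignment with $W_D=\mathcal{O}\!\left(\frac{1}{\sqrt{\tau T}}\right)$ is absorbed into the first term.

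The hard part will not be algebraic manipulation but rather being careful about the regime of validity: one must verify that the constraint $\eta L\le\frac{1}{2\tau}$ from \cref{theorem_1} is satisfied by $\eta=\frac{1}{\sqrt{\tau T}}$ (which requires $T\ge 4\tau L^2$), and that $A<1$ so that the denominator stays strictly positive. Once these conditions are stated, the rest is just collecting the dominant orders from the five terms above, and the result follows.
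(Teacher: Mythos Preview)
Your proposal is correct and follows essentially the same approach as the paper: substitute $\eta=\frac{1}{\sqrt{\tau T}}$, observe $A=\mathcal{O}(\tau/T)$ so the prefactor is $\Theta(1)$, and read off the orders of $T_1,T_3,T_4,T_5$, handling $T_2$ by taking $p_k=n_k$ (hence $W_D=0$), exactly as the discussion immediately preceding the corollary indicates. In fact the paper gives no formal proof beyond that discussion, so your term-by-term accounting and explicit check of the step-size constraint $\eta L\le\frac{1}{2\tau}$ are more careful than what appears there.
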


\cref{corollary_1} indicates that as $T \rightarrow \infty$, the optimization upper bound approaches $0$ (reaches a stationary point) and matches with previous convergence results~\cite{fednova}.

As the ultimate goal of this paper is to achieve more pleasant performance in practice, we are more interested in the previous part that the number of communication round $T$ is finite ($T$ should not be too large due to issues such as communication burden). For this reason, we explore better aggregation weights to achieve a tighter upper bound in this paper. However, our algorithm can actually converge (reach a stationary point) based on our~\cref{theorem_1} and \cref{corollary_1} if the number of communication round goes to infinite.

\subsection{Reformulated Upper Bound Minimization}
\label{append_proof_obtain_pk}

Generally, a tighter bound corresponds to a better optimization result. Thus, we explore the effects of $p_k$ on the upper bound in (\ref{theorem_last}). In~\cref{theorem_1}, there there are four parts related to $p_k$. First, we see that larger difference between $p_k$ and $n_k$ contributes to larger $W_D$ and thus smaller denominator in $T_0$ and larger value in $T_2$, which tends to loose the bound. As for $T_5$, by setting $p_k$ negatively correlated to $d_k$, when clients have different level of discrepancy, i.e. different $d_k$, $T_5$ will be further reduced, which tends to tighten the bound. Therefore, there could be an optimal set of $\{p_k | k \in [K]\}$ that contributes to the tightest bound, where an optimal $p_k$ should be correlated to both $n_k$ and $d_k$.

To minimize this upper bound, directly solving the minimization results in a complicated expression, which involves too many unknown hyper-parameters in practice. To simplify the expression, we convert the original objective from minimizing $(T_1+T_2+T_3+T_4+T_5)/T_0$ to minimizing $T_1+T_2+T_3+T_4+T_5 - \lambda T_0$, where $\lambda$ is a hyper-parameter. The converted objective still promotes maximization of $T_0$ and minimization of $T_1+T_2+T_3+T_4+T_5$, and still contributes to tighten the bound $(T_1+T_2+T_3+T_4+T_5)/T_0$. Then, our discrepancy-aware aggregation weight is obtained through solving the following optimization problem:
\begin{equation*}
\begin{aligned}
\mathop{\min}_{\{p_k\}} & \frac{2(1-A)[F(\mathbf{w}^{(0,0)})-F_{inf}]}{\tau \eta T} + \frac{(1-A)W_DB}{K}\sum_{m=1}^Kd_m + 2(1-A)L\eta \sigma^2 \sum_{m=1}^K p_m^2\\
& + 2(\tau-1)\sigma^2 L^2 \eta^2 + 2AB\sum_{m=1}^K p_m d_m -
\lambda \left( 1-3A-W_D(1-A) \right), \\
{\rm s.t.} & \sum_{m} p_m =1,~p_m \geq 0,
\end{aligned}
\end{equation*}
where $W_D=2K \left[ \sum_{m=1}^K (n_m-p_m)^2 \right]$.

To solve this constrained optimization, one condition of the optimal solution is that the derivative of the following function equals to zero:
\begin{equation*}
\begin{aligned}
Q(p_k) = & \frac{2(1-A)[F(\mathbf{w}^{(0,0)})-F_{inf}]}{\tau \eta T} + \frac{(1-A)W_DB}{K}\sum_{m=1}^Kd_m + 2(1-A)L\eta \sigma^2 \sum_{m=1}^K p_m^2\\
& + 2(\tau-1)\sigma^2 L^2 \eta^2 + 2AB\sum_{m=1}^K p_m d_m -
\lambda \left( 1-3A-W_D(1-A) \right) + \mu (\sum_{m=1}^K p_m-1) - \sum_{m=1}^K \nu_m p_m
\end{aligned}
\end{equation*}

Then, we have the following equation:
\begin{equation*}
\begin{aligned}
4(1-A)B\sum_{m=1}^Kd_m (p_k-n_k) + 4(1-A)L\eta\sigma^2p_k + 2ABd_k + 4K\lambda(1-A)(p_k-n_k) + \mu - \nu_m = 0,
\end{aligned}
\end{equation*}
from which we can rearrange and obtain the expression of $p_k$:
\begin{equation}
\label{eq:pk_detail}
p_k = \frac{\left[ 4B(1-A)\sum_{m=1}^K d_m + 4K\lambda(1-A) \right]n_k - 2ABd_k - \mu + \nu_k}{4B(1-A) \sum_{m=1}^Kd_m + 4(1-A)L\eta \sigma^2 + 4K\lambda(1-A)}.
\end{equation}

Finally, we can derive the following concise expression of an optimized aggregation weight $p_k$:

\begin{equation}
    p_k \propto n_k - a \cdot d_k + b,
\end{equation}
where $a, b$ are two constants.

This theoretically show that simply dataset-size-based aggregation could be not optimal since the above analysis suggests that for a tighter upper bound, the aggregation weight $p_k$ should be correlated with both dataset size $n_k$ and local discrepancy level $d_k$. This expression of aggregation weight can mitigate the limitation of standard dataset size weighted aggregation by assigning larger aggregation weight to client with larger dataset size and smaller discrepancy level.


\end{document}